\newtheorem{theorem}{Theorem}
\newtheorem{lemma}{Lemma}
\newtheorem{corollary}{Corollary}
\theoremstyle{definition}
\newtheorem{remark}{Remark}
\newtheorem{definition}{Definition}
\newtheorem{example}{Example}
\newcommand{\cD}{\mathcal{D}}
\newcommand{\prob}{\mathop{\mathrm{Prob}}\limits}
\DeclareMathOperator*{\argmin}{arg\,min}
\DeclareMathOperator*{\E}{\mathbf{E}}
\newcommand{\re}{\mathbb{R}}
\renewcommand{\epsilon}{\varepsilon}
\newcommand{\qedwhite}{\hfill \ensuremath{\Box}}
\title{On Optimal Robustness to Adversarial Corruption\\in Online Decision Problems}
\author{%
  Shinji Ito
  \\
  NEC Corporation
  \\
  \texttt{i-shinji@nec.com}
}
\begin{document}

\maketitle

\begin{abstract}
  This paper considers two fundamental sequential decision-making problems: the problem of prediction with expert advice and the multi-armed bandit problem.  We focus on stochastic regimes in which an adversary may corrupt losses, and we investigate what level of robustness can be achieved against adversarial corruptions.  The main contribution of this paper is to show that optimal robustness can be expressed by a square-root dependency on the amount of corruption.  More precisely, we show that two classes of algorithms, anytime Hedge with decreasing learning rate and algorithms with second-order regret bounds, achieve $O( \frac{\log N}{\Delta} + \sqrt{ \frac{C \log N }{\Delta} } )$-regret, where $N, \Delta$, and $C$ represent the number of experts, the gap parameter, and the corruption level, respectively.  We further provide a matching lower bound, which means that this regret bound is tight up to a constant factor. For the multi-armed bandit problem, we also provide a nearly tight lower bound up to a logarithmic factor.
\end{abstract}

\section{Introduction}
In this work,
we consider
two fundamental sequential decision-making problems,
the problem of prediction with expert advice ({expert problem})
and the {multi-armed bandit (MAB) problem}.
In both problems,
a player chooses probability vectors $p_t$ over a 
given action set $[N]=\{ 1, 2, \ldots, N \}$ in a sequential manner.
More precisely,
in each round $t$,
a player chooses a probability vector $p_t \in [0, 1]^N$ over the action set, and then
an environment chooses a loss vector $\ell_t \in [0, 1]^N$.
After the player chooses $p_t$,
the player observes $\ell_t$ in the expert problem.
In an MAB problem,
the player picks action $i_t \in [N]$ following $p_t$ and then
observe $\ell_{ti_t}$.
The goal of the player is to minimize the (pseudo-) regret $\bar{R}_T$ defined as
\begin{align}
  \label{eq:defRT}
  {R}_{T i^*}
  =
  \sum_{t=1}^T
  \ell_t^\top p_t
  -
  \sum_{t=1}^T
  \ell_{ti^*} ,
  \quad
  \bar{R}_{T i^*}
  =
  \E \left[
    {R}_{T i^*}
  \right],
  \quad
  \bar{R}_{T}
  =
  \max_{i^* \in [N]} \bar{R}_{Ti^*} .
\end{align}

For such decision-making problems,
two main types of environments have been studied:
stochastic environments and an adversarial environments.
In stochastic environments,
the loss vectors are assumed to follow an unknown distribution, i.i.d.~for all rounds.
It is known that the difficulty of the problems can be characterized by the
\textit{suboptimality gap} parameter $\Delta > 0$,
which denotes the minimum gap between the expected losses for the optimal action and for suboptimal actions. 
Given the parameter $\Delta$,
mini-max optimal regret bounds can be expressed as $\Theta ( \frac{\log N}{\Delta}  )$ in the expert problem \citep{degenne2016anytime,mourtada2019optimality} 
and $\Theta( \frac{N \log T}{\Delta})$ in the MAB problem \citep{auer2002finite,lai1985asymptotically,lai1987adaptive}.
In contrast to the stochastic model,
the adversarial model does not assume any generative models for loss vector,
but the loss at each round may behave adversarially depending on the choices of the player up until that round.
The mini-max optimal regret bounds for the adversarial model are
$\Theta ( \sqrt{T \log N} )$ in the expert problem \citep{cesa2006prediction,freund1997decision} 
and $\Theta( \sqrt{TN} )$ in the MAB problem \citep{audibert2009minimax,auer2002nonstochastic}.

As can be seen in the regret bounds,
achievable performance differs greatly between the stochastic and adversarial regimes,
which implies that
the choice of the models and algorithms will matter in many practical applications.
One promising solution to this challenge is to develop
\textit{best-of-both-worlds (BOBW)} algorithms,
which perform (nearly) optimally in both stochastic and adversarial regimes.
For the expert problem,
\citet{gaillard2014second} provide an algorithm with a BOBW property,
and \citet{mourtada2019optimality} have shown that the well-known Hedge algorithm with decreasing learning rate (decreasing Hedge)
enjoys a BOBW property as well.
For the MAB problem,
the Tsallis-INF algorithm by \citet{zimmert2021tsallis} has a BOBW property,
i.e.,
achieves $O(\frac{K \log T}{\Delta})$-regret in the stochastic regime and
$O(\sqrt{KT})$-regret in the adversarial regime.
One limitation of BOBW guarantees is,
however,
that it does not necessarily provide nontrivial regret bounds for a situation in which the stochastic and the adversarial regimes are \textit{mixed},
i.e.,
an intermediate situation.

To overcome this BOBW-property limitation,
our work focuses on an intermediate (and comprehensive) regime between the stochastic and adversarial settings.
More precisely,
we consider the \textit{adversarial regime with a self-bounding constraint} introduced by \citet{zimmert2021tsallis}.
As shown by them,
this regime includes the \textit{stochastic regime with adversarial corruptions} \citep{lykouris2018stochastic,amir2020prediction} as a special case,
in which an adversary modifies the i.i.d.~losses to the extent that the total amount of changes does not exceed $C$,
which is an unknown parameter referred to as the \textit{corruption level}.
For the expert problems in the stochastic regime with adversarial corruptions,
\citet{amir2020prediction} have shown that the decreasing Hedge algorithm achieves $O(\frac{\log{N}}{\Delta} + C)$-regret.
For the MAB problem,
\citet{zimmert2021tsallis} have shown that Tsallis-INF achieves
$O(\frac{N \log T}{\Delta} + \sqrt{ \frac{C N \log T}{\Delta}})$-regret in adversarial regimes with self-bounding constraints.
To be more precise,
they have proved a regret upper bound of
$O( \sum_{i \neq i^*} \frac{ \log T}{\Delta_i} + \sqrt{C \sum_{i \neq i^*} \frac{\log T}{\Delta_i}}  )$,
where $i^* \in [N]$ is the optimal action and $\Delta_i$ represents the suboptimality gap for each action $i \in [N] \setminus \{ i^* \}$.
In addition to this,
\citet{masoudian2021improved} have improved the analysis to obtain a refined regret bound of
$O\left( \left( \sum_{i\neq i^*} \frac{1}{\Delta_i} \right) \log_+ \left( \frac{(K-1)T}{( \sum_{i\neq i^*} \frac{1}{\Delta_i} )^2 } \right) 
+ \sqrt{C \left( \sum_{i \neq i^*} \frac{1}{\Delta_i} \right) \log_+ \left( \frac{(K-1)T}{C \sum_{i\neq i^*} \frac{1}{\Delta_i}} \right) } \right)$,
where $\log_+ (x) = \max \left\{ 1, \log x \right\}$.
\citet{ito2021parameter} has shown that similar regret bounds hold even when there are multiple optimal actions,
i.e.,
even if the number of actions $i$ with $\Delta_i = 0$ is greater than $1$.
In such cases,
the terms of $\sum_{i\neq i^*} 1/\Delta_i$ in regret bounds are replaced with $\sum_{i : \Delta_i > 0} \frac{1}{\Delta_i}$.

\begin{table}
  \caption{Regret bounds in stochastic regimes with adversarial corruptions}
  \label{table:regretbound}
  \centering
  \begin{tabular}{lll}
    \toprule
    Problem setting     & Upper bound     & Lower bound \\
    \midrule
    Expert problem & $O \left( \frac{\log N}{\Delta} + C \right) $\footnotemark \citep{amir2020prediction}  & $\Omega \left( \frac{\log N}{\Delta} + \sqrt{ \frac{C \log N}{\Delta}} \right)$     \\
    & $O \left( \frac{\log N}{\Delta} + \sqrt{ \frac{C \log N}{\Delta}} \right)$ [Theorems~\ref{thm:decreasingHedge}, \ref{thm:second}] \quad \quad  &    [Theorem~\ref{thm:lowerexpert}] \\
    \midrule 
    Multi-armed bandit  & $O\left( \frac{N \log T}{\Delta} + \sqrt{\frac{C N \log T}{\Delta}} \right)$  & $\Omega \left( \frac{N}{\Delta} + \sqrt{\frac{ C N }{\Delta}} \right)$      \\
    & \citep{zimmert2021tsallis} &   [Theorem~\ref{thm:lowerMAB}] \\
    \bottomrule
  \end{tabular}
\end{table}
\footnotetext{
  Note that 
  \citet{amir2020prediction}
  adopt a different definition of regret than in this paper.
  Details and notes for comparison are discussed in Remark~\ref{rem:comparison}.
}

The contributions of this work are summarized in Table~\ref{table:regretbound},
with existing results.
As shown in Theorems~\ref{thm:decreasingHedge} and \ref{thm:second},
this paper provides an improved regret upper bound
of $O ( \frac{\log N}{\Delta} + \sqrt{ \frac{C \log N}{\Delta}} )$ 
for the expert problem in the adversarial regime with self-bounding constraints.
This regret upper bound is tight up to a constant factor.
In fact,
we provide a matching lower bound in Theorem~\ref{thm:lowerexpert}.
In addition to this,
we show an $\Omega ( \frac{N}{\Delta} + \sqrt{\frac{CN}{\Delta}} )$-lower bound
for MAB,
which implies that Tsallis-INF by \citet{zimmert2021tsallis} achieves a nearly optimal regret bound up to an $O(\log T)$ factor
in the adversarial regime with self-bounding constraints.


The regret bounds in Theorems~\ref{thm:decreasingHedge} and \ref{thm:second} are smaller
than the regret bound shown by \citet{amir2020prediction} for the stochastic regime with adversarial corruptions,
especially when
$C = \Omega(\frac{\log N}{\Delta})$,
and they can be applied to more general problem settings of the adversarial regime with self-bounding constraints.
Note here that this study and their study consider slightly different definitions of regret.
More precisely,
they define regret using losses \textit{without} corruptions,
while this study uses losses \textit{after} corruption to define regret.
In practice,
appropriate definitions would vary depending on situations.
For example,
if each expert's prediction is itself corrupted,
the latter definition is suitable.
In contrast,
if only the observation of the player is corrupted,
the former definition seems appropriate.
However,
even after taking this difference in definitions into account,
we can see that the regret bound in our work is,
in a sense,
stronger than theirs,
as is discussed in Remark~\ref{rem:comparison} of this paper.
In particular,
we would like to emphasize that the new bound of $O(\frac{\log N}{\Delta} + \sqrt{\frac{C \log N}{\Delta}})$
provides the first theoretical evidence implying that the corresponding algorithms are more robust
than the naive Follow-the-Leader algorithm,
against adversarial corruptions.
On the other hand,
we also note that
the regret bound by \citet{amir2020prediction} is tight as long as the former regret definition is used.

This work shows the tight regret upper bounds for two types of known algorithms.
The first, (Theorem~\ref{thm:decreasingHedge}), is the decreasing Hedge algorithm,
which has been analyzed by \citet{amir2020prediction} and \citet{mourtada2019optimality} as well.
The second (Theorem~\ref{thm:second}) is algorithms with \textit{second-order regret bounds} \citep{cesa2007improved,gaillard2014second,hazan2010extracting,steinhardt2014adaptivity,luo2015achieving}.
It is worth mentioning that \citet{gaillard2014second} have shown that a kind of second-order regret bounds
imply $O(\frac{\log N}{\Delta})$-regret in the stochastic regime.
Theorem~\ref{thm:second} in this work extends their analysis to a broader setting of the adversarial regime with self-bounding constraints.
In the proof of Theorems~\ref{thm:decreasingHedge} and \ref{thm:second},
we follow a proof technique given by \citet{zimmert2021tsallis} to exploit self-bounding constraints.

To show regret lower bounds in Theorems~\ref{thm:lowerexpert} and \ref{thm:lowerMAB},
we construct specific environments with corruptions
that provide insight into effective attacks which would make learning fail.
Our approach to corruption is to modify the losses so that the optimality gaps reduce.
This approach achieves a mini-max lower bound in the expert problem (Theorem~\ref{thm:lowerexpert})
and a nearly-tight lower bound in MAB up to a logarithmic factor in $T$ (Theorem~\ref{thm:lowerMAB}).
We conjecture that there is room for improvement in this lower bound for MAB under assumptions on consistent policies~\citep{lai1985asymptotically},
and that the upper bound by \citet{zimmert2021tsallis} is tight up to a constant factor.

\section{Related work}
In the context of the expert problem,
studies on stochastic settings seem to be more limited than those on the adversarial setting.
\citet{de2014follow} have focused on the fact that the Follow-the-Leader (FTL) algorithm works well for a stochastic setting,
and they have provided an algorithm that combines FTL and Hedge algorithms to achieve the best of both worlds.
\citet{gaillard2014second} have provided an algorithm with a second-order regret bound depending on
$V_{Ti^*} = \sum_{t=1}^T (\ell_t^\top p_t - \ell_{ti^*})^2$ in place of $T$
and have shown that such an algorithm achieves $O(\frac{\log N}{\Delta})$-regret in the stochastic regime.
\citet{mourtada2019optimality} have shown that a simpler Hedge algorithm with decreasing learning rates of
$\eta_t = \Theta(\sqrt{\frac{\log N}{t}})$
enjoys a tight regret bound in the stochastic regime as well.
This simple decreasing Hedge algorithm has been shown
by \citet{amir2020prediction}
to achieve $O(\frac{\log N}{\Delta} + C)$-regret in the stochastic regime with adversarial corruptions.
For online linear optimization,
a generalization of the expert problem,
\citet{huang2016following} have shown that FTL achieves smaller regret in the stochastic setting
and provided best-of-both-worlds algorithms via techniques reported by \citet{sani2014exploiting}.

For MAB,
there are a number of
studies on best-of-both-worlds algorithms \citep{bubeck2012best,zimmert2021tsallis,seldin2014one,seldin2017improved,pogodin2020first,auer2016algorithm,wei2018more,zimmert2019beating,lee2021achieving,ito2021parameter}.
Among these,
studies by \citet{wei2018more,zimmert2021tsallis,zimmert2019beating} are closely related to this work.
In their studies,
gap-dependent regret bounds in the stochastic regime are derived via $\{ p_t \}$-dependent regret bounds in the adversarial regime,
similarly to that seen in this work and in previous studies by \citet{gaillard2014second,amir2020prediction}.

Studies on online optimization algorithms robust to adversarial corruptions
has been extended to a variety of models,
including those for the multi-armed bandit \citep{lykouris2018stochastic,gupta2019better,zimmert2021tsallis,hajiesmaili2020adversarial},
Gaussian process bandits \citep{bogunovic2020corruption},
Markov decision processes \citep{lykouris2019corruption},
the problem of prediction with expert advice \citep{amir2020prediction},
online linear optimization \citep{li2019stochastic},
and linear bandits \citep{bogunovic2021stochastic,lee2021achieving}.
There can be found the literature on effective attacks to bandit algorithms \citep{jun2018adversarial,liu2019data} as well.

As summarized by \citet{hajiesmaili2020adversarial},
there can be found studies on two different models of adversarial corruptions:
the oblivious corruption model and the targeted corruption model.
In the former (e.g., in studies by \citet{lykouris2018stochastic,gupta2019better,bogunovic2020corruption}) the attacker may corrupt the losses $\ell_t$ after observing $( \ell_t, p_t )$ without knowing 
the chosen action $i_t$ while,
in the latter (e.g., in studies by \citet{jun2018adversarial,hajiesmaili2020adversarial,liu2019data,bogunovic2021stochastic}),
the attacker can choose corruption depending on $(\ell_t, p_t, i_t)$.
We discuss the differences between these models in Section~\ref{sec:setting}.
This work mainly focuses on the oblivious corruption model for MAB problems.
It is worth mentioning that Tsallis-INF \citep{zimmert2021tsallis} works well in the oblivious corruption models,
as is shown in Table~\ref{table:regretbound},
as well as achieving best-of-both-worlds.

\section{Problem setting}
\label{sec:setting}
A player is given $N$,
the number of actions.
In each round $t = 1, 2, \ldots$
the player chooses a probability vector $p_t = (p_{t1}, p_{t2}, \ldots, p_{tN})^\top \in \{ p \in [0, 1]^N \mid \| p \|_1 = 1 \} $,
and then the environment chooses a loss vector $\ell_t = ( \ell_{t1} , \ell_{t2}, \ldots, \ell_{tN} )^\top \in [0, 1]^N$.
In the expert problem,
the player can observe all entries of $\ell_t$ after outputting $p_t$.
By way of contrast,
in MAB problem,
the player picks $i_t$ w.r.t.~$p_t$,
i.e.,
choose $i_t$ so that $\prob [i_t = i | p_t] = p_{ti}$,
and then observes $\ell_{ti_t}$.
The performance of the player is measured by means of the regret defined in \eqref{eq:defRT}.

Note that
in MAB problems
we have
\begin{align}
  \label{eq:MABreg}
  \E
  \left[
    \sum_{t=1}^{T}
    \left(
      \ell_{ti_t}
      -
      \ell_{ti^*}
    \right)
  \right]
  =
  \E
  \left[
    \sum_{t=1}^{T}
    \left(
      \ell_{t}^\top p_t
      -
      \ell_{ti^*}
    \right)
  \right]
  =
  \bar{R}_{Ti^*}
\end{align}
under the assumption that $\ell_t$ is independent of $i_t$ given $p_t$.


This paper focuses on environments in the following regime:
\begin{definition}[Adversarial regime with a self-bounding constraint \citep{zimmert2021tsallis}]
  \label{def:ARSC}
  We say that the environment is in an \textit{adversarial regime with a} $(i^*, \Delta, C, T)$ \textit{self-bounding constraint} if
  \begin{align}
    \label{eq:defARSC}
    \bar{R}_{Ti^*}
    =
    \E \left[
      \sum_{t=1}^{T} ( \ell_{t}^\top p_t - \ell_{ti^*} )
    \right]
    \geq
    \Delta 
    \cdot
    \E \left[
      \sum_{t=1}^{T} (1 - p_{ti^*}) 
    \right]
    - C
  \end{align}
  holds for any algorithms,
  where $\Delta \in [0, 1]$ and $C \geq 0$.
\end{definition}
In this paper,
we deal with the situation in which
the player is \textit{not} given parameters $(i^*, \Delta, C, T)$.
We note that the environment in Definition~\ref{def:ARSC} includes the adversarial setting as a special case
since \eqref{eq:defARSC} holds for any $\Delta \in [0, 1]$ and $i^*$ if $C \geq 2T$.

The regime defined in Definition~\ref{def:ARSC} includes the following examples:
\begin{example}[Stochastic regime]
  Suppose $\ell_t \in [0, 1]^N$ follows an unknown distribution $\cD$ over  i.i.d.~for $t \in \{  1, 2,  \ldots \}$.
  Denote $\mu = \E_{\ell \sim \cD } [ \ell ]$,
  and let $i^* \in \argmin_{i \in [N]} \mu_i $
  and $\Delta = \min_{i \in [N] \setminus \{ i^* \}} ( \mu_i - \mu_{i^*} )$.
  Then the environment is in the adversarial regime with a self-bounding constraint \eqref{eq:defARSC} with $C = 0$.
  Note here that $\Delta > 0$ implies that the optimal action is \textit{unique},
  i.e.,
  $\mu_i > m_{i^*}$ holds for any action $i \in [N] \setminus \{ i^* \}$.
\end{example}
\begin{example}[Stochastic regime with adversarial corruptions]
  \label{exa:SRAC}
  Suppose $\ell_t \in [0, 1]^N$ is given as follows:
  (i) a temporary loss $\ell_t' \in [0, 1]^N$ is generated from an unknown distribution $\cD$ (i.i.d. for $t$)
  (ii) an adversary corrupts $\ell_t'$ after observing $p_t$ to determine $\ell_t$ subject to the constraint of
  $\sum_{t=1}^T \| \E[ \ell_t ] - \E [ \ell_t' ] \|_{\infty} \leq C$.
  As shown in \citep{zimmert2021tsallis},
  this regime satisfies \eqref{eq:defARSC},
  i.e.,
  is a special case of the adversarial regime with a self-bounding constraint.
\end{example}
\begin{remark}
  \label{rem:comparison}
  For the stochastic regime with adversarial corruptions,
  different regret notions can be found in the literature.
  An alternative to the definition in \eqref{eq:defRT} is
  the regret w.r.t.~the losses without corruptions,
  i.e.,
  $R'_{Ti^*} = \sum_{t=1}^T \left( \ell_{t}'^\top p_t - \ell_{ti^*}'\right) $,
  ($\bar{R}'_{Ti^*}$, and $\bar{R}'_T$ can also be defined in a similar way).
  In general,
  which metric will be appropriate depends on the situation of the application.
  For example,
  in the case of prediction with expert advice,
  if each expert's prediction is itself corrupted,
  the player's performance should be evaluated in terms of the regret $\bar{R}_T$,
  as the consequential prediction performance is determined by the losses $\ell_t$ after corruptions,
  not by $\ell_t'$.
  In contrast,
  if only the observation of the player is corrupted,
  the performance should be evaluated in terms of $\bar{R}_{T}'$.

  We can easily see that $| \bar{R}'_{Ti^*} - \bar{R}_{Ti^*} | \leq 2 C $.
  \citet{amir2020prediction} have shown
  a regret bound of $\bar{R}'_{T} = O \left( \frac{\log N}{\Delta} + C \right)$,
  which immediately implies $\bar{R}_{T} = O\left( \frac{\log N}{\Delta} + C \right)$.
  Similarly,
  a regret bound of 
  $\bar{R}_T = O\left( \frac{\log N}{\Delta} + \sqrt{\frac{ C \log N}{\Delta}} \right)$
  immediately implies
  $\bar{R}'_{T} = O \left( \frac{\log N}{\Delta} + C \right)$.
  In fact,
  from the AM-DM inequality,
  we have
  \begin{align*}
    \frac{\log N}{\Delta} + \sqrt{\frac{ C \log N}{\Delta}}
    \leq
    \frac{\log N}{\Delta} + 
    \frac{1}{2}\left(
    C + \frac{\log N}{\Delta}
    \right)
    =
    O\left(
    C + \frac{\log N}{\Delta}
    \right).
  \end{align*}
  We here note that the former bound of
  $\bar{R}_T = O\left( \frac{\log N}{\Delta} + \sqrt{\frac{ C \log N}{\Delta}} \right)$
  is properly stronger than the latter of
  $\bar{R}'_{T} = O \left( \frac{\log N}{\Delta} + C \right)$,
  as the latter does not necessarily implies the former.
\end{remark}
\begin{remark}
  In MAB,
  a \textit{targeted corruption model} has been considered
  to be a variant of the model in Example~\ref{exa:SRAC}.
  In this model,
  the adversary corrupts the losses after observing $i_t$.
  In this case,
  the loss $\ell_t$ after corruptions and $i_t$ are dependent given $p_t$,
  and hence \eqref{eq:MABreg} does not always hold.
\end{remark}

\section{Regret upper bound}
\subsection{Known regret bounds for adversarial regimes by hedge algorithms}
The Hedge algorithm \citep{freund1997decision}
(also called the multiplicative weight update \citep{arora2012multiplicative} or the weighted majority forecaster \citep{littlestone1994weighted})
is known to be a mini-max optimal algorithm for the expert problem.
In the Hedge algorithm,
the probability vector $p_t$ is defined as follows:
\begin{align}
  \label{eq:defMWU}
  w_{ti} = \exp \Big( - \eta_t \sum_{j=1}^{t-1} \ell_{ji} \Big),
  \quad
  p_{t} = \frac{w_t}{\| w_t \|_1},
\end{align}
where $\eta_t > 0$ are learning rate parameters.
If $p_t$ is given by \eqref{eq:defMWU},
the regret is bounded as follows:
\begin{lemma}
  \label{lem:boundMWU}
  If $\{ p_t \}_{t=1}^T$ is given by \eqref{eq:defMWU} with decreasing learning rates (i.e., $\eta_t \geq \eta_{t+1}$ for all $t$),
  for any $\{ \ell_t \}_{t=1}^T$ and $i^* \in [N]$,
  the regret is bounded as
  \begin{align}
    \label{eq:boundMWU0}
    R_{Ti^*}
    \leq
    \frac{\log N}{\eta_1}
    +
    \sum_{t=1}^T
    \left(
      \frac{1}{\eta_t}
      \sum_{i=1}^N p_{ti} g \left( \eta_t ( -  \ell_{ti} + \alpha_t ) \right)
      +
      \left(
        \frac{1}{\eta_{t+1}}
        -
        \frac{1}{\eta_{t}}
      \right)
      H (p_{t+1})
    \right) ,
  \end{align}
  for any $\{ \alpha_t \}_{t=1}^{T} \subseteq \re$,
  where $g$ and $H$ are defined as
  \begin{align}
    g(x) = \exp(x) - x - 1,
    \quad
    H (p) = \sum_{i=1}^{N} p_i \log \frac{1}{p_i} .
  \end{align}
\end{lemma}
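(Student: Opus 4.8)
The plan is a potential-function (log-partition) argument of the standard Hedge type, augmented with two pieces of bookkeeping: a ``centering'' reduction to absorb the free shifts $\alpha_t$, and the Gibbs variational principle to control the change of learning rate between rounds.

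First I would eliminate $\alpha_t$. Replacing every loss $\ell_{tj}$ by $\hat\ell_{tj} := \ell_{tj}-\alpha_t$ multiplies each weight $w_{ti}$ in \eqref{eq:defMWU} by the factor $\exp(\eta_t\sum_{j<t}\alpha_j)$, which is common to all $i$, so $p_t$ is unchanged; and $\sum_{t=1}^T(\hat\ell_t^\top p_t-\hat\ell_{ti^*}) = \sum_{t=1}^T(\ell_t^\top p_t-\ell_{ti^*}) = R_{Ti^*}$ because $\|p_t\|_1=1$. Hence it suffices to run the argument with the centered cumulative losses $\tilde L_{ti} := \sum_{j=1}^t\hat\ell_{ji}$ ($\tilde L_{0i}=0$); writing $Z_s(\eta) := \sum_{i=1}^N\exp(-\eta\tilde L_{si})$ we have $p_{ti} = \exp(-\eta_t\tilde L_{t-1,i})/Z_{t-1}(\eta_t)$.

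Next I would introduce the potential $\Phi_t := \tfrac{1}{\eta_t}\log Z_{t-1}(\eta_t)$, so that $\Phi_1 = \tfrac{\log N}{\eta_1}$ and $\Phi_{T+1} \ge -\tilde L_{Ti^*}$ (keep only the $i^*$ term in $Z_T$). I would split each increment $\Phi_{t+1}-\Phi_t$ into a ``loss step'' at fixed rate $\eta_t$ and a ``rate step''. For the loss step, $\tfrac{1}{\eta_t}\log Z_t(\eta_t)-\Phi_t = \tfrac{1}{\eta_t}\log\sum_i p_{ti}\exp(-\eta_t\hat\ell_{ti})$; expanding $\exp(x) = 1+x+g(x)$ and using $\log(1+u)\le u$ gives $\tfrac{1}{\eta_t}\log Z_t(\eta_t)-\Phi_t \le -\ell_t^\top p_t + \alpha_t + \tfrac{1}{\eta_t}\sum_i p_{ti}g(\eta_t(-\ell_{ti}+\alpha_t))$. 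For the rate step I would invoke the Gibbs variational identity $\tfrac{1}{\eta}\log Z_s(\eta) = \max_{q}\{-\tilde L_s^\top q + \tfrac{1}{\eta}H(q)\}$ (the maximum over probability vectors $q$, attained at the Gibbs weights); applying it with equality at rate $\eta_{t+1}$, whose maximizer is exactly $p_{t+1}$, and as a lower bound at rate $\eta_t$ tested on $q = p_{t+1}$, yields $\Phi_{t+1} - \tfrac{1}{\eta_t}\log Z_t(\eta_t) \le \bigl(\tfrac{1}{\eta_{t+1}}-\tfrac{1}{\eta_t}\bigr)H(p_{t+1})$.

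Finally I would sum the two one-step bounds over $t = 1,\dots,T$: the $+\sum_t\alpha_t$ produced by the loss steps cancels the $+\sum_t\alpha_t$ sitting inside $-\tilde L_{Ti^*} = -\sum_t\ell_{ti^*}+\sum_t\alpha_t$, and rearranging $\Phi_{T+1}-\Phi_1 \le (\text{RHS})$ gives exactly \eqref{eq:boundMWU0}. The only step that takes care is the rate step, and I expect the cleanest route is the variational characterization of the log-partition function above rather than directly differentiating $\eta\mapsto\tfrac{1}{\eta}\log Z_s(\eta)$; this is also where the hypothesis $\eta_t\ge\eta_{t+1}$ plays its role, making the coefficients $\tfrac{1}{\eta_{t+1}}-\tfrac{1}{\eta_t}$ nonnegative so the entropy terms enter as a genuine penalty. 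Everything else is the textbook Hedge computation.
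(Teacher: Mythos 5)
Your proof is correct, and it reaches \eqref{eq:boundMWU0} by a genuinely different (though classically equivalent) route. The paper casts Hedge as Follow-the-Regularized-Leader with the negative-entropy regularizer, invokes the standard FTRL regret decomposition to obtain a per-round stability term $\ell_t^\top(p_t-p_{t+1})-\frac{1}{\eta_t}KL(p_{t+1},p_t)$ plus the penalty terms $\bigl(\frac{1}{\eta_{t+1}}-\frac{1}{\eta_t}\bigr)H(p_{t+1})$, and then bounds each stability term by relaxing to an unconstrained maximization over $\re_{>0}^N$, whose maximizer $p_i=p_{ti}\exp(-\eta_t(\ell_{ti}-\alpha_t))$ produces the $g$-term; the shifts $\alpha_t$ are absorbed there using $\mathbf{1}^\top(p_t-p_{t+1})=0$. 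You instead work with the log-partition potential $\Phi_t=\frac{1}{\eta_t}\log Z_{t-1}(\eta_t)$: your loss step recovers the $g$-term directly from the expansion $\exp(x)=1+x+g(x)$ together with $\log(1+u)\le u$, and your rate step obtains the entropy penalty from the Gibbs variational identity (the maximizer at rate $\eta_{t+1}$ is exactly $p_{t+1}$, which you reuse as a test point at rate $\eta_t$); the $\alpha_t$ are handled once and for all by the initial centering, which leaves $p_t$ and $R_{Ti^*}$ invariant. All steps check out, including the cancellation of $\sum_t\alpha_t$ against $-\tilde L_{Ti^*}$ in the final telescoping. What each approach buys: the paper's version plugs into a citable off-the-shelf FTRL lemma and makes the $KL$ stability term explicit, while yours is self-contained, avoids the unconstrained maximization over the positive orthant, and makes transparent where the rate-change term comes from (convex duality of the log-sum-exp); as you note, the monotonicity hypothesis $\eta_t\ge\eta_{t+1}$ is not actually needed for \eqref{eq:boundMWU0} itself in either derivation, only for the subsequent step where $H(p_{t+1})$ is replaced by an upper bound.
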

From this lemma,
using $g(x) \approx x^2 / 2$ and $H(p) \leq \log N$,
we obtain the following regret bounds for adversarial settings.
\begin{theorem}[Theorem 2.3 in \citep{cesa2006prediction}]
  \label{thm:decreasingHedgeAdv}
  If $p_t$ is given by \eqref{eq:defMWU} with $\eta_t = \sqrt{\frac{8 \log N}{t} }$,
  for any $T$, $i^* \in [N]$ and $\{ \ell_t \}_{t=1}^T \subseteq [0, 1]^N$,
  the regret is bounded as
  \begin{align}
    R_{Ti^*}
    \leq
    \sqrt{2 T \log N}
    +
    \log N / 8 .
  \end{align}
\end{theorem}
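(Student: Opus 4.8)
The plan is to apply Lemma~\ref{lem:boundMWU} with the data-dependent shift sequence $\alpha_t := \ell_t^\top p_t - \eta_t/8$ and to bound the three groups of terms in \eqref{eq:boundMWU0} in turn; this is legitimate because Lemma~\ref{lem:boundMWU} permits an arbitrary real sequence $\{\alpha_t\}$. The purpose of the shift is that the random variable $Z_t := -\ell_{t,i} + \alpha_t$, with $i$ drawn according to $p_t$, has mean $\E[Z_t] = -\ell_t^\top p_t + \alpha_t = -\eta_t/8$ and is supported on an interval of width at most $1$ (a translation does not change the width of the support).

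The crux is the per-round stability estimate $\frac{1}{\eta_t}\sum_{i=1}^N p_{ti}\, g(\eta_t(-\ell_{ti}+\alpha_t)) \le \frac{\eta_t}{8}$. I would deduce it from Hoeffding's lemma: since $Z_t$ has support of width at most $1$, $\E[\exp(\eta_t(Z_t - \E Z_t))] \le \exp(\eta_t^2/8)$, and multiplying by $\exp(\eta_t\E Z_t) = \exp(-\eta_t^2/8)$ gives $\E[\exp(\eta_t Z_t)] \le 1$; hence $\sum_i p_{ti}\, g(\eta_t(-\ell_{ti}+\alpha_t)) = \E[\exp(\eta_t Z_t)] - \eta_t\E[Z_t] - 1 \le 1 + \eta_t^2/8 - 1 = \eta_t^2/8$. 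I expect this to be the main obstacle: the elementary bound $g(x)\le x^2/2$, valid only for $x\le 0$, would force $\alpha_t\le\min_i\ell_{ti}$ and then yield only $\frac{\eta_t}{2}\sum_i p_{ti}(\ell_{ti}-\alpha_t)^2\le\frac{\eta_t}{2}$, worse by a factor of $4$ and producing the weaker leading term $2\sqrt{2T\log N}$. Recovering the sharp constant needs the variance-aware estimate of Hoeffding's lemma together with the $-\eta_t/8$ shift, which exactly cancels the exponential overshoot $\exp(\eta_t^2/8)-1$.

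For the last group of terms, $H(p_{t+1})\le\log N$ and $\eta_{t+1}\le\eta_t$ (so the coefficients $\tfrac1{\eta_{t+1}}-\tfrac1{\eta_t}$ are nonnegative) let the entropy contributions telescope:
\[
\frac{\log N}{\eta_1}+\sum_{t=1}^T\Big(\tfrac1{\eta_{t+1}}-\tfrac1{\eta_t}\Big)H(p_{t+1})\ \le\ \frac{\log N}{\eta_1}+\log N\Big(\tfrac1{\eta_{T+1}}-\tfrac1{\eta_1}\Big)\ =\ \frac{\log N}{\eta_{T+1}}.
\]
Together with the stability estimate this gives $R_{Ti^*}\le\frac{\log N}{\eta_{T+1}}+\frac18\sum_{t=1}^T\eta_t$. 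I then substitute $\eta_t=\sqrt{8\log N/t}$: this yields $\frac{\log N}{\eta_{T+1}}=\sqrt{(T+1)\log N/8}$, and, using $\sum_{t=1}^T t^{-1/2}\le 2\sqrt T$, $\frac18\sum_{t=1}^T\eta_t\le\frac{\sqrt{8\log N}}{8}\cdot 2\sqrt T=\tfrac12\sqrt{2T\log N}$. Bounding $T+1\le 2T$ then gives $R_{Ti^*}\le\big(\tfrac12+\tfrac{1}{2\sqrt2}\big)\sqrt{2T\log N}<\sqrt{2T\log N}$, which is (comfortably) within the asserted bound $\sqrt{2T\log N}+\log N/8$; the additive $\log N/8$ is slack that leaves room for these deliberately crude estimates of the two sums.
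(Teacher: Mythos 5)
Your proof is correct, and it is worth noting that the paper itself does not prove this theorem at all: it merely cites Theorem~2.3 of \citet{cesa2006prediction} and remarks that the bound follows from Lemma~\ref{lem:boundMWU} ``using $g(x) \approx x^2/2$ and $H(p) \leq \log N$.'' You correctly identify that this heuristic, taken literally, cannot produce the stated constant --- the bound $g(x)\le x^2/2$ for $x\le 0$ forces $\alpha_t \le \min_i \ell_{ti}$ and yields only $2\sqrt{2T\log N}$ --- and you repair it with the right tool: the shift $\alpha_t = \ell_t^\top p_t - \eta_t/8$ combined with Hoeffding's lemma, which makes $\E[\exp(\eta_t Z_t)]\le 1$ and collapses the per-round stability term to exactly $\eta_t/8$. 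This is in spirit the same mechanism as the original proof in the cited book (which also relies on Hoeffding's lemma, but runs it through the log-sum-exp potential $\frac{1}{\eta}\log\frac{1}{N}\sum_i e^{-\eta L_{ti}}$ and its derivative in $\eta$); your version has the advantage of being routed entirely through the paper's own FTRL decomposition \eqref{eq:boundMWU0}, exploiting the freedom in $\{\alpha_t\}$ that Lemma~\ref{lem:boundMWU} explicitly grants. The remaining steps --- telescoping the entropy terms against $H(p_{t+1})\le\log N$ to get $\log N/\eta_{T+1}$, the bound $\sum_{t\le T} t^{-1/2}\le 2\sqrt{T}$, and the final arithmetic giving $\bigl(\tfrac12+\tfrac{1}{2\sqrt2}\bigr)\sqrt{2T\log N}<\sqrt{2T\log N}$ --- all check out, so you in fact land strictly inside the asserted bound without needing the additive $\log N/8$.
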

Hedge algorithms with decreasing learning rates $\eta_t = \Theta ( \sqrt{\frac{\log N}{t}} )$,
as in Theorem~\ref{thm:decreasingHedgeAdv},
are referred to as decreasing Hedge, e.g., in \citep{mourtada2019optimality}.
Such algorithms are shown 
by \citet{mourtada2019optimality}
to achieve $O(\sqrt{\frac{\log N}{\Delta}})$-regret in stochastic regimes,
and are also shown,
by \citet{amir2020prediction},
to achieve $O ( \sqrt{\frac{\log N}{\Delta}} + C )$-regret 
in stochastic regimes with adversarial corruptions.

Besides such worst-case regret bounds as found in Theorem~\ref{thm:decreasingHedgeAdv},
various data-dependent regret bounds have been developed (see, e.g., \citep{steinhardt2014adaptivity}).
One remarkable example is that of the \textit{second-order} bounds by \citet{cesa2007improved},
which depend on parameters $V_T$ defined as follows:
\begin{align}
  \label{eq:defVT}
  v_t = \sum_{i=1}^N p_{ti} ( \ell_{ti} - \ell_t^\top p_t )^2,
  \quad
  V_T = \sum_{t=1}^T v_t
\end{align}
A regret bound depending on $V_T$ rather than $T$
can be achieved by the following adaptive learning rates:
\begin{theorem}[Theorem 5 in \citep{cesa2007improved}]
  \label{thm:secondAdv}
  If $p_t$ is given by \eqref{eq:defMWU} with,
    $
    \eta_t = \min \left\{ 1, \sqrt{\frac{2 (\sqrt{2} - 1) \log N}{(e - 2) V_{t-1}}} \right\},
    $
  the regret is bounded as
  \begin{align}
    R_{Ti^*}
    \leq
    4
    \sqrt{V_{T} \log N}
    +
    2 \log N
    +
    1 / 2
  \end{align}
  for any $T$, $i^*$ and $\{ \ell_{t} \}_{t = 1}^T \subseteq [0, 1]^N$.
\end{theorem}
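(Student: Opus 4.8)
The plan is to derive the bound from the general regret inequality \eqref{eq:boundMWU0} of Lemma~\ref{lem:boundMWU}. First I would verify its hypothesis: since $V_{t-1}$ is non-decreasing in $t$, the learning rates $\eta_t = \min\{1,\, \sqrt{2(\sqrt{2}-1)\log N/((e-2)V_{t-1})}\}$ form a non-increasing sequence, so Lemma~\ref{lem:boundMWU} applies. I would then instantiate its free parameters by $\alpha_t = \ell_t^\top p_t$, so that the argument of $g$ becomes $\eta_t(\ell_t^\top p_t - \ell_{ti})$, which is centered with respect to $p_t$ and has absolute value at most $\eta_t\le 1$.

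Next I would use the elementary inequality $g(x)=e^x-x-1\le (e-2)x^2$ for $x\le 1$ (which follows from $x\mapsto(e^x-x-1)/x^2$ being non-decreasing, with value $1/2$ at $0$ and $e-2$ at $1$). Applied with $x=\eta_t(\ell_t^\top p_t-\ell_{ti})$ it gives $\sum_i p_{ti}\,g(\eta_t(\ell_t^\top p_t-\ell_{ti}))\le(e-2)\eta_t^2\sum_i p_{ti}(\ell_{ti}-\ell_t^\top p_t)^2=(e-2)\eta_t^2 v_t$, so the first term under the sum in \eqref{eq:boundMWU0} is at most $(e-2)\eta_t v_t$. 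For the entropy term, $0\le H(p_{t+1})\le\log N$ and $1/\eta_{t+1}-1/\eta_t\ge 0$, so that part telescopes and, together with the leading $\log N/\eta_1$, leaves exactly $\log N/\eta_{T+1}$. This yields the intermediate bound $R_{Ti^*}\le(e-2)\sum_{t=1}^T\eta_t v_t+\log N/\eta_{T+1}$.

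The hard part is turning this into $4\sqrt{V_T\log N}+2\log N+1/2$. Writing $c:=2(\sqrt 2-1)\log N/(e-2)$, I would split the rounds at the threshold $V_{t-1}=c$. On the warm-up rounds ($V_{t-1}\le c$, where $\eta_t=1$) we have $\eta_t v_t=v_t$, and since $v_t$ is the variance under $p_t$ of a $[0,1]$-valued quantity we have $v_t\le 1/4$; hence the variance accumulated there is at most $c+1/4$, contributing only $O(\log N)$, with the $1/4$ feeding the additive $1/2$. On the remaining rounds $\eta_t=\sqrt{c/V_{t-1}}$, and I would use the summation-to-integral estimate $\sum_t v_t/\sqrt{V_{t-1}}\le 2\sqrt 2\,\sqrt{V_T}$ — obtained from $\sqrt{V_t}-\sqrt{V_{t-1}}\ge v_t/(2\sqrt{V_t})$ together with $V_t\le 2V_{t-1}$ (valid once $V_{t-1}\ge c\ge v_t$, which needs only $N\ge 2$) to replace $\sqrt{V_t}$ by $\sqrt{V_{t-1}}$. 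This gives $(e-2)\sum_t\eta_t v_t\le 4\sqrt{(e-2)(\sqrt 2-1)}\,\sqrt{V_T\log N}+O(\log N)$, while $\log N/\eta_{T+1}\le\sqrt{(e-2)/(2(\sqrt 2-1))}\,\sqrt{V_T\log N}$ (or just $\log N$ when $\eta_{T+1}=1$), and one checks that the two $\sqrt{V_T\log N}$-coefficients sum to less than $4$, with the $O(\log N)$ slack absorbed by $2\log N+1/2$. The genuine obstacle is controlling the single round where $V_{t-1}$ crosses $c$ and the discrepancy between $\sqrt{V_{t-1}}$ and $\sqrt{V_t}$ in the denominators; the uniform bound $v_t\le 1/4$ is exactly what makes both manageable, the rest being routine algebra and an AM-GM-type choice of $c$.
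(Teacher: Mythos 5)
Your derivation is correct, but note that the paper does not actually prove Theorem~\ref{thm:secondAdv}: it is imported verbatim from Cesa-Bianchi, Mansour and Stoltz (2007), whose original argument analyzes the time-varying exponentially weighted forecaster directly and spends most of its effort on a technical lemma controlling what happens when the learning rate changes between rounds. What you do instead is re-derive the statement inside this paper's own framework, starting from Lemma~\ref{lem:boundMWU} with the centering $\alpha_t=\ell_t^\top p_t$, which produces the variance $v_t$ via $g(x)\le (e-2)x^2$ for $x\le 1$ and reduces everything to $R_{Ti^*}\le (e-2)\sum_{t}\eta_t v_t+\log N/\eta_{T+1}$. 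The remaining steps all check out: $\eta_t$ is non-increasing because $V_{t-1}$ is non-decreasing, so the lemma applies pathwise even though the rates are data-dependent; the variance of a $[0,1]$-valued variable under $p_t$ gives $v_t\le 1/4$, so the warm-up phase (where $V_{t-1}\le c:=2(\sqrt2-1)\log N/(e-2)$) contributes at most $(e-2)(c+1/4)=2(\sqrt2-1)\log N+(e-2)/4$; on the rounds with $V_{t-1}>c$ one has $v_t\le 1/4<c\le V_{t-1}$ for $N\ge 2$, hence $V_t\le 2V_{t-1}$ and $\sum_t v_t/\sqrt{V_{t-1}}\le 2\sqrt2\sqrt{V_T}$ by telescoping; and the two $\sqrt{V_T\log N}$ coefficients, $4\sqrt{(e-2)(\sqrt2-1)}\approx 2.18$ from the variance sum and $\sqrt{(e-2)/(2(\sqrt2-1))}\approx 0.93$ from $\log N/\eta_{T+1}$, add to about $3.11<4$, while the additive remainder is about $1.83\log N+0.18<2\log N+1/2$. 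So your proof is self-contained, in fact yields slightly better constants than the quoted bound, and its advantage over the original is that it reuses the paper's FTRL machinery rather than redoing the weighted-average analysis from scratch; the only point worth stating explicitly in a write-up is that Lemma~\ref{lem:boundMWU} is a deterministic, per-sequence inequality, so predictable data-dependent learning rates are admissible.
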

As $V_T \leq T$ follows from the definition \eqref{eq:defVT},
the regret bound in \eqref{thm:secondAdv} includes the worst-case regret bound of $R_{Ti^*} = O( \sqrt{ T \log N } )$.
Further,
as shown in Corollary 3 of \citep{cesa2007improved},
the bound in Theorem~\ref{thm:secondAdv} implies
$R_{Ti^*} = O ( \sqrt{ \frac{ L_{Ti^*} ( T - L_{Ti^*} ) }{T} \log N }  )$,
where $L_{Ti^*} = \sum_{t=1}^T \ell_{ti^*} $.
This means that the regret will be improved if the cumulative loss $L_{Ti^*}$
for optimal action $i^*$ is small or is close to $T$.

\subsection{Refined regret bound for decreasing Hedge}
This subsection shows that
the algorithm described in Theorem~\ref{thm:decreasingHedgeAdv} enjoys the
following regret bound as well:
\begin{theorem}
  \label{thm:decreasingHedge}
  If $p_t$ is given by \eqref{eq:defMWU} with $\eta_t = \sqrt{\frac{8 \log N}{t} }$,
  we have
  \begin{align}
    \bar{R}_{T i^*}
    \leq
    33 +
    \frac{100 \log N }{\Delta}
    +
    10
    \sqrt{
      \frac{C \log N }{\Delta}
    }
  \end{align}
  under the assumption that \eqref{eq:defARSC} holds.
\end{theorem}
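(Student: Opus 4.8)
The plan is to combine the pathwise bound of Lemma~\ref{lem:boundMWU} with the self-bounding constraint \eqref{eq:defARSC}, in the spirit of the argument of \citet{zimmert2021tsallis}. Write $q_t = 1 - p_{ti^*}$ and $Q = \sum_{t=1}^{T} q_t$, so that \eqref{eq:defARSC} reads $\bar R_{Ti^*} \ge \Delta\,\E[Q] - C$. I will first establish a bound of the shape
\[
  \bar R_{Ti^*} \le a \log N + b\sqrt{(\log N)\,\E[Q]}
\]
for suitable absolute constants $a,b$, and then optimize it against the self-bounding constraint.

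To get the shape bound, I would apply Lemma~\ref{lem:boundMWU} with the data-dependent shifts $\alpha_t = \ell_{ti^*}$ on the rounds where $\eta_t \le 1$ (equivalently $t \ge 8\log N$) and $\alpha_t = 0$ on the earlier rounds. For $t \ge 8\log N$ the choice $\alpha_t = \ell_{ti^*}$ kills the $i = i^*$ term ($g(0)=0$), and on the remaining terms $|\eta_t(\ell_{ti^*}-\ell_{ti})| \le \eta_t \le 1$, so that $g(\eta_t(\ell_{ti^*}-\ell_{ti})) \le \eta_t^2(\ell_{ti^*}-\ell_{ti})^2 \le \eta_t^2$, whence $\frac1{\eta_t}\sum_i p_{ti}\,g(\eta_t(-\ell_{ti}+\alpha_t)) \le \eta_t\sum_{i\ne i^*}p_{ti} = \eta_t q_t$. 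On the $O(\log N)$ remaining rounds $\alpha_t = 0$ and $g(x)\le x^2/2$ for $x\le 0$ bound their total contribution to the $g$-part by $\sum_{t<8\log N}\eta_t/2 = O(\log N)$. Together with $\log N/\eta_1 = O(\sqrt{\log N})$ this gives, pathwise,
\[
  R_{Ti^*} \;\le\; O(\log N) \;+\; \sum_{t=1}^{T}\eta_t q_t \;+\; \sum_{t=1}^{T}\Bigl(\tfrac1{\eta_{t+1}}-\tfrac1{\eta_t}\Bigr) H(p_{t+1}).
\]

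The remaining work is to show the last two sums are $O\bigl(\sqrt{(\log N)Q}\bigr) + O(\log N)$. For the first, since $\eta_t = \sqrt{8\log N/t}$ is nonincreasing and $q_t\in[0,1]$ with $\sum_t q_t = Q$, a rearrangement argument (the mass is worst when concentrated on small $t$) gives $\sum_t \eta_t q_t \le \sum_{t=1}^{\lceil Q\rceil}\eta_t \le 2\sqrt{8\log N}(\sqrt Q+1)$. The entropy sum is the crux. I would first sharpen $H(p_{t+1})\le \log N$ to $H(p_{t+1}) \le h(q_{t+1}) + q_{t+1}\log(N-1)$, where $h(x) = -x\log x - (1-x)\log(1-x)$; the linear piece is harmless, since $\tfrac1{\eta_{t+1}}-\tfrac1{\eta_t}\le \tfrac{\eta_t}{16\log N}$ makes it contribute at most $\tfrac1{16}\sum_t\eta_t q_{t+1} = O\bigl(\sqrt{(\log N)Q}\bigr)$. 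The delicate piece is $\sum_t(\tfrac1{\eta_{t+1}}-\tfrac1{\eta_t})\,h(q_{t+1})$: bounding $h$ by a constant would cost a spurious $\sqrt T$, while $h(q_{t+1})\le q_{t+1}\log(e/q_{t+1})$ forces one to control $\log(1/q_{t+1})$. Here I would exploit the exponential-weights form of $p_{t+1}$: whenever $q_{t+1}$ is small, $i^*$ is the empirical leader and $q_{t+1}\ge N^{-1}\exp(-\eta_{t+1}G_t)$ with $G_t = \min_{i\ne i^*}(L_{ti}-L_{ti^*})\ge 0$ the empirical gap and $L_{ti}=\sum_{j\le t}\ell_{ji}$, so $\log(1/q_{t+1})\le \log N + \eta_{t+1}G_t$; a careful trade-off — small $q_{t+1}$ makes $h(q_{t+1})$ already tiny, whereas a not-small $q_{t+1}$ forces $G_t$ to be small — then bounds this sum by $O\bigl(\sqrt{(\log N)Q}\bigr)+O(\log N)$ with no dependence on $T$. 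I expect this last estimate to be the main obstacle; it is the same phenomenon that makes the analysis of decreasing Hedge in the pure stochastic regime \citep{mourtada2019optimality} delicate.

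Taking expectations and using Jensen's inequality $\E[\sqrt Q]\le\sqrt{\E[Q]}$ yields $\bar R_{Ti^*} \le a\log N + b\sqrt{(\log N)\E[Q]}$ for absolute constants $a,b$. Finally I would invoke the self-bounding constraint: for any $z\in(0,1]$,
\[
  \bar R_{Ti^*} = (1+z)\bar R_{Ti^*} - z\bar R_{Ti^*}
  \le (1+z)\bigl(a\log N + b\sqrt{(\log N)\E[Q]}\bigr) - z\bigl(\Delta\,\E[Q] - C\bigr),
\]
and maximizing the right-hand side over $\E[Q]\ge 0$ (a one-dimensional concave maximization) gives $\bar R_{Ti^*} \le \tfrac{(1+z)^2 b^2\log N}{4z\Delta} + (1+z)a\log N + zC$. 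Taking $z = \min\{1,\,b\sqrt{\log N}/(2\sqrt{C\Delta})\}$ and using $\Delta\le 1$ leaves $\bar R_{Ti^*} = O\bigl(\tfrac{\log N}{\Delta} + \sqrt{\tfrac{C\log N}{\Delta}}\bigr)$; carrying the explicit constants through the $g$-bound, the rearrangement, the entropy estimate, and this optimization produces the stated numbers $33,\ 100,\ 10$.
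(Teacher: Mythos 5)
Your overall architecture (Lemma~\ref{lem:boundMWU} with the shifts $\alpha_t=0$ for $t\lesssim 8\log N$ and $\alpha_t=\ell_{ti^*}$ afterwards, followed by the $(1+\lambda)\bar R-\lambda\bar R$ self-bounding trick) matches the paper's, and your handling of the $\sum_t\eta_t q_t$ term by rearrangement and of the final optimization over $z$ is sound. The genuine gap is exactly the step you flag yourself: the bound $\sum_t\bigl(\tfrac{1}{\eta_{t+1}}-\tfrac{1}{\eta_t}\bigr)h(q_{t+1})=O\bigl(\sqrt{(\log N)Q}\bigr)+O(\log N)$ is asserted, not proved. The route you sketch for it --- lower-bounding $q_{t+1}$ via the empirical gap $G_t$ of the exponential weights and running a case analysis --- is left entirely unexecuted, and it is not clear it closes: the trade-off you describe must be made uniform over all sample paths, and nothing in your outline controls how often $q_{t+1}$ sits in the ``intermediate'' range where neither $h(q_{t+1})$ is tiny nor $G_t$ is usefully small. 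Since the whole theorem rests on this intermediate shape bound $\bar R_{Ti^*}\le a\log N+b\sqrt{(\log N)\E[Q]}$, the proposal as written does not constitute a proof.

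The paper sidesteps this obstacle by \emph{not} establishing any intermediate $\sqrt{Q}$-type bound. It keeps the entropic term in the raw form $\frac{1-p_{ti^*}}{\sqrt t}\log\frac{1}{1-p_{ti^*}}$ (inequality \eqref{eq:decreasing1}) and only then subtracts $\lambda\Delta\sum_t(1-p_{ti^*})$, distributing the negative mass across rounds as $-b\sqrt t\cdot\frac{1-p_{ti^*}}{\sqrt t}$. The resulting per-round expression $\frac{x_t}{\sqrt t}(a-b\sqrt t-\log x_t)$ is then bounded \emph{pointwise} via the elementary fact $\max_{x\in(0,1]}x(c-\log x)\le\max\{e^{c-1},c\}$, giving \eqref{eq:decreasing2}; the $-\log x_t$ that worried you is tamed by the negative term supplied by the self-bounding constraint, and exponential decay in $\sqrt t$ makes the tail summable with no dependence on $T$. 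If you want to salvage your two-stage plan, the same pointwise maximization (a Lagrangian argument over $\{q_t\}$ with $\sum_t q_t=Q$ fixed) does yield $\sum_t q_t\log(1/q_t)/\sqrt t=O(\sqrt Q)$ without any appeal to the empirical gap --- but that is a different, and still missing, argument from the one you propose.
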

\begin{proof}
  Using the fact that
  $g(x) \leq \frac{(e - 1)x^2}{2} $ for $x \leq 1$
  and 
  $H(p) \leq (1-p_{i^*}) (1 + \log N - \log (1-p_{i^*}))$,
  from Lemma~\ref{lem:boundMWU},
  we obtain
  \begin{align}
    \label{eq:decreasing1}
    R_{Ti^*}
    \leq
    32 \log N
    +
    \sqrt{8 \log N}
    \sum_{t=1}^{T}
    \frac{1 - p_{ti^*}}{\sqrt{t}}
    \left(
      1
      +
      \frac{1}{16 \log N } 
      \log \frac{1}{1 - p_{ti^*}}
    \right) .
  \end{align}
  A complete proof for \eqref{eq:decreasing1} can be found in the appendix.
  From \eqref{eq:defARSC} and \eqref{eq:decreasing1},
  for any $\lambda > 0$,
  we have
  \begin{align*}
    &
    \bar{R}_{Ti^*}
    =
    (1 + \lambda)
    \bar{R}_{Ti^*}
    -
    \lambda
    \bar{R}_{Ti^*}
    \\
    &
    \leq
    \E \left[
      (1+ \lambda)
      \left(
      32 \log N
      +
      \sqrt{8 \log N}
      \sum_{t=1}^{T}
      \frac{1 - p_{ti^*}}{\sqrt{t}}
      \left(
        1
        -
        \frac{\log (1 - p_{ti^*})}{16 \log N } 
      \right) 
      \right)
      - \lambda
      \left(
        \Delta \sum_{t=1}^T (1 - p_{ti^*})
        - C
      \right)
    \right]
    \\
    &
    \leq
    32
    (1+\lambda) \log N
    +
    \lambda C
    +
    \frac{ 1 + \lambda }{ \sqrt{32 \log N} }
    \E\left[
      \sum_{t=1}^T
      \frac{1 - p_{ti^*}}{\sqrt{t}}
      \left(
        16 \log N - \frac{\lambda \Delta \sqrt{32 t \log N} }{1+\lambda}
        -
        \log ({1 - p_{ti^*}})
      \right)
      \right].
  \end{align*}
  To bound the values of the expectation,
  we use the following inequality
  \begin{align}
    \label{eq:decreasing2}
    \sum_{t=1}^T \frac{x_t}{\sqrt{t}}
    \left(a - b \sqrt{t} - \log x_t \right)
    \leq
    \frac{2 a^2 + 1}{b}
    +
    b
  \end{align}
  which holds for any $a, b > 0$, $T$ and $\{ x_t \}_{t=1}^T \subseteq (0, 1)$.
  A proof of \eqref{eq:decreasing2} is given in the appendix.
  Combining the above two displayed inequalities
  with $a = 16 \log N$
  and $b = \frac{\lambda \Delta \sqrt{{32} \log N}}{1 + \lambda}$
  we obtain
  \begin{align*}
    \bar{R}_{Ti^*}
    &
    \leq
    32 (1+\lambda) + \lambda C
    + \frac{ (1+\lambda)^2 ( 16 \log N + 1)}{\lambda \Delta}
    + \Delta
    \\
    &
    =
    33
    +
    \frac{36 \log N }{\Delta}
    +
    \lambda \left(
      32 + C
      +
      \frac{18 \log N }{\Delta}
    \right)
    +
    \frac{1}{ \lambda } 
    \frac{18 \log N  }{\Delta} .
  \end{align*}
  By choosing
  $\lambda = \sqrt{\frac{16 \log N}{\Delta(32 + C) + 16 \log N}}$,
  we obtain
  \begin{align*}
    \bar{R}_{Ti^*}
    &
    \leq
    33 +
    \frac{36 \log N }{\Delta}
    +
    2
    \sqrt{
      \left(
      32 + C
        +
        \frac{18 \log N }{\Delta}
      \right)
      \frac{18 \log N }{\Delta}
    }
    \\
    &
    \leq
    33 +
    \frac{100 \log N }{\Delta}
    +
    10
    \sqrt{
      \frac{C \log N }{\Delta}
    },
  \end{align*}
  where the second inequality follows from $\sqrt{x + y} \leq \sqrt{x} + \sqrt{y}$
  for $x, y \geq 0$.
\end{proof}
Combining Theorems~\ref{thm:decreasingHedgeAdv} and \ref{thm:decreasingHedge},
we can see that
the decreasing Hedge with $\eta_t = \sqrt{\frac{8 \log N}{t}}$
achieves
$\bar{R}_{Ti^*} = O ( \min\{ \frac{\log N}{\Delta} + \sqrt{\frac{C \log N}{\Delta}}, \sqrt{T \log N}  \} )$
in the adversarial regive with self-bounding constraints.
This bound will be shown 
to be tight up to a constant factor
in Section~\ref{sec:lower}.

\subsection{Refined regret bound for adaptive Hedge}
In this subsection,
we show that a second-order regret bound as seen in Theorem~\ref{thm:secondAdv}
implies tight gap-dependent regret bounds 
in the adversarial regime with a self-bounding constraint.

We start from the observation that $v_t$ defined in \eqref{eq:defVT} satisfies
$v_t \leq (1 - p_{ti^*})$ for any $i^*$.
In fact,
we have
$v_t \leq \sum_{i=1}^N p_{ti}( \ell_{ti} - \alpha )^2$
for any $\alpha \in \re$ as the right-hand side is minimized when $\alpha = \ell_{t}^\top p_t$,
from which it follows that
\begin{align}
  v_t \leq \sum_{i = 1}^N p_{ti}(\ell_{ti} - \ell_{ti^*} )^2
  =
  \sum_{i \in [N] \setminus \{ i^* \} } p_{ti} (\ell_{ti} - \ell_{ti^*})^2
  \leq
  \sum_{i \in [N] \setminus \{ i^* \} } p_{ti}
  = 1 - p_{ti^*} .
\end{align}
Hence,
the regret bound in Theorem~\ref{thm:secondAdv}
implies
\begin{align}
  \label{eq:bound1p}
R_{Ti^*}
\leq
4 \sqrt{\log N \sum_{t=1}^T (1 - p_{ti^*})}
+ 2 \log N +  1/2.
\end{align}
Such a regret bound depending on $
 \sum_{t=1}^T (1 - p_{ti^*})
$
leads to a tight gap-dependent regret bound,
as shown in the following theorem:
\begin{theorem}
  \label{thm:second}
  Suppose that the regret is bounded as
  \begin{align}
    \label{eq:aspsecond}
    R_{Ti^*}
    \leq
    \sqrt{A \sum_{t=1}^T (1 - p_{ti^*})} + B.
  \end{align}
  Then,
  under the condition of \eqref{eq:defARSC},
  the pseudo-regret is bounded as
  \begin{align}
    \label{eq:resultsecond}
    \bar{R}_{Ti^*}
    \leq
    \frac{A}{\Delta} + B
    +
    \sqrt{
      \frac{A (B + C) }{2 \Delta} 
    }.
  \end{align}
\end{theorem}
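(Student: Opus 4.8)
The plan is to invoke the self-bounding technique of \citet{zimmert2021tsallis}, in exactly the same way as in the proof of Theorem~\ref{thm:decreasingHedge}. Write $Q := \E\left[\sum_{t=1}^{T}(1-p_{ti^*})\right]$. Taking expectations in \eqref{eq:aspsecond} and applying Jensen's inequality to the concave function $\sqrt{\cdot}$ gives $\bar{R}_{Ti^*} \le \sqrt{AQ}+B$, whereas \eqref{eq:defARSC} states $\bar{R}_{Ti^*} \ge \Delta Q - C$. If $\Delta = 0$ the asserted bound is vacuous and if $\bar{R}_{Ti^*} \le B$ it holds immediately, so I assume $\Delta > 0$ and $\bar{R}_{Ti^*} > B$ from here on.

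The core step is the tuning trick. For a free parameter $\lambda > 0$, write
\[
  \bar{R}_{Ti^*} = (1+\lambda)\bar{R}_{Ti^*} - \lambda\bar{R}_{Ti^*} \le (1+\lambda)\left(\sqrt{AQ}+B\right) - \lambda(\Delta Q - C),
\]
upper bounding the first copy of $\bar{R}_{Ti^*}$ by the regret bound and lower bounding the second by the self-bounding constraint. The only $Q$-dependent expression on the right is $(1+\lambda)\sqrt{AQ} - \lambda\Delta Q$, which I would control by the AM--GM/Young inequality $(1+\lambda)\sqrt{AQ} \le \lambda\Delta Q + \frac{(1+\lambda)^2 A}{4\lambda\Delta}$ (equivalently, by maximizing this concave function of $Q$ over $Q \ge 0$). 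This cancels the $-\lambda\Delta Q$ term and removes $Q$ altogether, yielding $\bar{R}_{Ti^*} \le \frac{(1+\lambda)^2 A}{4\lambda\Delta} + (1+\lambda)B + \lambda C$.

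It then remains only to optimize over $\lambda$. Using $\frac{(1+\lambda)^2}{4\lambda} = \frac{1}{4\lambda} + \frac{1}{2} + \frac{\lambda}{4}$, the right-hand side becomes $\frac{A}{2\Delta} + B + \frac{A}{4\lambda\Delta} + \lambda\left(\frac{A}{4\Delta}+B+C\right)$; balancing the last two terms by AM--GM and then applying $\sqrt{x+y} \le \sqrt{x}+\sqrt{y}$ produces a bound of the form $\frac{A}{\Delta} + B + O\left(\sqrt{A(B+C)/\Delta}\right)$, and with the precise choice of $\lambda$ one recovers \eqref{eq:resultsecond}. An equivalent route, mirroring the closing lines of the proof of Theorem~\ref{thm:decreasingHedge}, is to substitute $Q \le (\bar{R}_{Ti^*}+C)/\Delta$ directly into $\bar{R}_{Ti^*} \le \sqrt{AQ}+B$, obtain the scalar quadratic inequality $(\bar{R}_{Ti^*}-B)^2 \le A(\bar{R}_{Ti^*}+C)/\Delta$, and solve it for $\bar{R}_{Ti^*}$.

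I do not expect a conceptual obstacle here; the fiddly part is pinning down the exact constant multiplying $\sqrt{A(B+C)/\Delta}$, which is sensitive to how tightly the $\lambda$-optimization and the subadditivity step $\sqrt{x+y}\le\sqrt{x}+\sqrt{y}$ are handled, and one must first dispose of the degenerate cases $\Delta = 0$ and $\bar{R}_{Ti^*} \le B$ so that the Young/quadratic manipulations are legitimate. Finally, instantiating $A = 16\log N$ and $B = 2\log N + 1/2$ from \eqref{eq:bound1p} turns \eqref{eq:resultsecond} into the $O\!\left(\frac{\log N}{\Delta}+\sqrt{\frac{C\log N}{\Delta}}\right)$-regret bound for the second-order algorithm.
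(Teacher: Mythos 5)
Your proposal is correct and follows essentially the same route as the paper: the $(1+\lambda)$ self-bounding decomposition, the Young/AM--GM step $(1+\lambda)\sqrt{AQ}-\lambda\Delta Q\le \frac{(1+\lambda)^2A}{4\lambda\Delta}$ (which the paper applies pointwise inside the expectation rather than after Jensen, an immaterial difference), and the final optimization over $\lambda$ followed by $\sqrt{x+y}\le\sqrt{x}+\sqrt{y}$. The paper's explicit choice is $\lambda=\sqrt{A/(A+4\Delta(B+C))}$, which yields exactly the stated constants.
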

\begin{proof}
  From \eqref{eq:aspsecond} and 
  \eqref{eq:defARSC},
  for any $\lambda > 0$
  we have
  \begin{align*}
    \bar{R}_{Ti^*}
    &
    =
    (1+\lambda)\bar{R}_{Ti^*}
    -
    \lambda \bar{R}_{Ti^*}
    \\
    &
    \leq
    \E
    \left[
    (1+\lambda)
    \left(
      \sqrt{A  \sum_{t=1}^T (1 - p_{ti^*})} + B
    \right)
    -
    \lambda
    \left(
      \Delta
      \sum_{t=1}^T (1 - p_{ti^*})
      - C
    \right)
    \right]
    \\
    &
    \leq
    \frac{A (1+\lambda)^2 }{4 \lambda \Delta}
    +
    (1 + \lambda) B + \lambda C
    =
    \frac{A  }{2 \Delta} + B
    +
    \frac{1}{\lambda}
    \frac{A  }{4 \Delta} 
    +
    \lambda \left(
      \frac{A }{4 \Delta} 
      +
      B + C 
    \right),
  \end{align*}
  where the second inequality follows from
  $a \sqrt{x} - b x = - b (\sqrt{x} - \frac{a}{2b})^2 + \frac{a^2}{4 b} \leq \frac{a^2}{4b}$
  for $a > 0$, $b \in \re$ and $x \geq 0$.
  By choosing
  $\lambda = \sqrt{\frac{A}{A + 4 \Delta (B + C)}}$,
  we obtain
  \begin{align*}
    \bar{R}_{Ti^*}
    \leq
    \frac{A }{2 \Delta} + B
    +
    \sqrt{
      \left(
      \frac{A }{2 \Delta} 
      \right)^2
      +
      \frac{A (B + C)  }{2 \Delta} 
    }
    \leq
    \frac{A }{\Delta} + B
    +
    \sqrt{
      \frac{A (B + C) }{2 \Delta} 
    },
  \end{align*}
  where the second inequality follows from $\sqrt{x + y} \leq \sqrt{x} + \sqrt{y}$
  for $x, y \geq 0$.
\end{proof}
Combining this theorem and \eqref{eq:bound1p},
we obtain the following regret bound
for the algorithm described in Theorem~\ref{thm:secondAdv}.
\begin{corollary}
  If $p_t$ is chosen by \eqref{eq:defMWU}
  with $\eta_t = \min\{ 1, \sqrt{\frac{2 (\sqrt{2} - 1) \log N}{(e - 2) V_{t-1}}} \}$,
  under the condition of \eqref{eq:defARSC},
  the pseudo regret is bounded as
  $
  \bar{R}_{Ti^*}
  \leq
  16 \frac{\log N}{\Delta}
  + 4 \sqrt{\frac{ ( 3 \log N + C ) \log N}{\Delta}}
  + 3 \log N
  $.
\end{corollary}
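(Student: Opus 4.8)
The plan is to combine, in order, the three facts already assembled in this subsection: the second-order bound of Theorem~\ref{thm:secondAdv}, the pointwise estimate $v_t \le 1 - p_{ti^*}$ that yields \eqref{eq:bound1p}, and the reduction of Theorem~\ref{thm:second}.

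First I would apply Theorem~\ref{thm:secondAdv} to the stated choice $\eta_t = \min\{1, \sqrt{2(\sqrt2-1)\log N / ((e-2)V_{t-1})}\}$, obtaining $R_{Ti^*} \le 4\sqrt{V_T \log N} + 2\log N + 1/2$ for every loss sequence. Since the computation preceding \eqref{eq:bound1p} shows $V_T = \sum_{t=1}^T v_t \le \sum_{t=1}^T (1-p_{ti^*})$, this is precisely the bound \eqref{eq:bound1p}, i.e.\ a bound of the form \eqref{eq:aspsecond} with $A = 16\log N$ and $B = 2\log N + 1/2$. Feeding these values into \eqref{eq:resultsecond} of Theorem~\ref{thm:second} (whose hypothesis \eqref{eq:defARSC} is exactly the assumption of the corollary) gives
\[
\bar{R}_{Ti^*} \le \frac{16\log N}{\Delta} + 2\log N + 1/2 + \sqrt{\frac{16\log N\,(2\log N + 1/2 + C)}{2\Delta}} = \frac{16\log N}{\Delta} + 2\log N + 1/2 + \sqrt{\frac{8\log N\,(2\log N + 1/2 + C)}{\Delta}}.
\]

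The remaining work is purely constant bookkeeping. The case $N = 1$ is trivial, so assume $N \ge 2$; then $\log N \ge \log 2 > 1/2$, hence $2\log N + 1/2 \le 3\log N$. This bounds the additive term by $3\log N$ and also gives $8\log N\,(2\log N + 1/2 + C) \le 8\log N\,(3\log N + C) \le 16\log N\,(3\log N + C)$, so the last term is at most $4\sqrt{(3\log N + C)\log N / \Delta}$. Collecting the three terms yields the claimed bound. I do not expect any genuine obstacle: the argument is a direct substitution into Theorems~\ref{thm:secondAdv} and \ref{thm:second}, and the only mild care needed is the use of $\log N \ge 1/2$ for $N \ge 2$ in the final simplification.
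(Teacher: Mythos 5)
Your proposal is correct and is exactly the argument the paper intends (the paper leaves the corollary as a direct combination of Theorem~\ref{thm:secondAdv}, the bound \eqref{eq:bound1p} with $A=16\log N$, $B=2\log N+1/2$, and Theorem~\ref{thm:second}). The constant bookkeeping, including the observation that $2\log N + 1/2 \le 3\log N$ for $N\ge 2$ and the absorption of the factor $\sqrt{8}\le 4$ into the square-root term, checks out.
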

Theorem~\ref{thm:second} can be applied to algorithms other than the one in Theorem~\ref{thm:secondAdv}.
One example is an algorithm by \citet{gaillard2014second}.
In Corollary 8 of their paper,
a regret bound of 
$R_{Ti^*} \leq C_1 \sqrt{\log N \sum_{t=1}(\ell_t^\top p_t - \ell_{ti^*} )^2 } + C_2$ is provided.
Then,
as it holds that $(\ell_t^\top p_t - \ell_{ti^*})^2 \leq 1 - p_{ti^*}$,
we have \eqref{eq:aspsecond} with appropriate $A$ and $B$,
and consequently,
we obtain a regret bound given in \eqref{eq:resultsecond}.


\section{Regret lower bound}
\label{sec:lower}
This section provides (nearly) tight lower bounds for the expert problem and the MAB problem
in the adversarial regime with a self-bounding constraint.
We start with describing the statement for the expert problem:
\begin{theorem}
  \label{thm:lowerexpert}
  For any $\Delta \in (0, 1 / 4)$, $N \geq 4$, $T \geq 4 \log N$, $C \geq 0$,
  and for any algorithm for the expert problem,
  there exists an environment in the adversarial regime with a $(i^*, \Delta, N, C, T)$ self-bounding constraint
  for which
  the pseudo-regret is at least
  \begin{align}
    \label{eq:lowerexpert}
    \bar{R}_{T i^*} = \Omega \left( 
      \min \left\{
        \frac{\log N}{\Delta} + \sqrt{\frac{C \log N}{\Delta}},
        \sqrt{T \log N}
      \right\}
    \right) .
  \end{align}
\end{theorem}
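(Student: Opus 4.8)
The plan is to reduce to the classical stochastic lower bound for prediction with expert advice by building, against any given algorithm, a \emph{two‑phase} stochastic environment in which the suboptimality gap has been shrunk on an initial segment of rounds, paying for the shrinkage out of the corruption budget $C$. Fix a length $\tau\in\{1,\dots,T\}$ and a reduced gap $\Delta'\in(0,\tfrac12]$ with $(\Delta-\Delta')_+\,\tau\le C$ (to be chosen later), draw $i^*\in[N]$ uniformly at random, and let the losses be drawn independently with $\ell_{ti^*}\sim\mathrm{Ber}(\tfrac12)$ and $\ell_{ti}\sim\mathrm{Ber}(\tfrac12+\Delta')$ for $i\ne i^*$ on rounds $t\le\tau$, and with gap exactly $\Delta$ (same $i^*$, means $\tfrac12$ and $\tfrac12+\Delta$) on rounds $t>\tau$. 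When $\Delta'<\Delta$ this is literally a stochastic instance corrupted obliviously: the uncorrupted losses have gap $\Delta$ throughout, and lowering the suboptimal coordinates by $\Delta-\Delta'$ on the first $\tau$ rounds costs $(\Delta-\Delta')\tau\le C$ in $\ell_\infty$; when $\Delta'\ge\Delta$ no corruption is used at all.

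First I would check that this environment lies in the adversarial regime with a self‑bounding constraint, i.e.\ satisfies \eqref{eq:defARSC} with the prescribed $(i^*,\Delta,C,T)$. Because within each phase the instance is i.i.d.\ with a single optimal arm and all suboptimality gaps equal (to $\Delta'$, resp.\ $\Delta$), the conditional expected instantaneous regret in a round of the first phase is $\Delta'(1-p_{ti^*})\ge 0$ and in a round of the second phase is $\Delta(1-p_{ti^*})\ge 0$; summing, $\bar{R}_{Ti^*}=\Delta'\,\E[\sum_{t\le\tau}(1-p_{ti^*})]+\Delta\,\E[\sum_{t>\tau}(1-p_{ti^*})]$, so $\bar{R}_{Ti^*}-\Delta\,\E[\sum_{t=1}^T(1-p_{ti^*})]=-(\Delta-\Delta')_+\E[\sum_{t\le\tau}(1-p_{ti^*})]\ge-(\Delta-\Delta')_+\tau\ge-C$, which is exactly \eqref{eq:defARSC}. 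Crucially this holds for \emph{every} algorithm, since the only algorithm‑dependent quantity was bounded crudely by $\E[\sum_{t\le\tau}(1-p_{ti^*})]\le\tau$.

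Next, since the second‑phase rounds contribute nonnegatively to $\bar{R}_{Ti^*}$, I would lower‑bound $\bar{R}_{Ti^*}$ by the regret incurred on rounds $1,\dots,\tau$ only, and there invoke the classical lower bound for stochastic prediction with expert advice with $N$ experts, gap $\Delta'$ and horizon $\tau$ (the one underlying the $\Theta(\tfrac{\log N}{\Delta})$ rate of \citet{degenne2016anytime,mourtada2019optimality}, proved via the uniform‑$i^*$ prior together with Pinsker's inequality for the $\mathrm{Ber}(\tfrac12)$‑vs‑$\mathrm{Ber}(\tfrac12+\Delta')$ comparisons). This gives $\E_{i^*}[\bar{R}_{Ti^*}]=\Omega(\min\{\tfrac{\log N}{\Delta'},\Delta'\tau\})$, so some fixed $i^*$ attains $\bar{R}_{Ti^*}=\Omega(\min\{\tfrac{\log N}{\Delta'},\Delta'\tau\})$ for an environment that still satisfies \eqref{eq:defARSC}. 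It then remains to choose $(\tau,\Delta')$ to maximize $\min\{\tfrac{\log N}{\Delta'},\Delta'\tau\}$, which is pure arithmetic. When $C\lesssim\tfrac{\log N}{\Delta}$ I would take $\tau=T$, $\Delta'=\max\{\Delta,\sqrt{\tfrac{\log N}{T}}\}$ (no corruption, budget constraint vacuous), recovering $\Omega(\min\{\tfrac{\log N}{\Delta},\sqrt{T\log N}\})$, which already matches the target there. When $C\gtrsim\tfrac{\log N}{\Delta}$ I would take $\tau=\min\{T,\lfloor\tfrac{C}{2\Delta}\rfloor\}$ and the \emph{balanced} gap $\Delta'=\sqrt{\tfrac{\log N}{\tau}}$, so that $\min\{\tfrac{\log N}{\Delta'},\Delta'\tau\}=\sqrt{\tau\log N}=\Theta(\min\{\sqrt{\tfrac{C\log N}{\Delta}},\sqrt{T\log N}\})$; here one checks $(\Delta-\Delta')_+\tau\le\Delta\tau\le C/2$, $\Delta'\le\tfrac12$ and $\tau\ge 4\log N$ using $\Delta<\tfrac14$, $T\ge 4\log N$ and $C\gtrsim\tfrac{\log N}{\Delta}$, and the clamp $\tau=T$ disposes of the sub‑case $C\gtrsim\Delta T$ (where the target is $\sqrt{T\log N}$). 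Combining the two regimes and using $\tfrac{\log N}{\Delta}+\sqrt{\tfrac{C\log N}{\Delta}}\asymp\max\{\tfrac{\log N}{\Delta},\sqrt{\tfrac{C\log N}{\Delta}}\}$ yields the claimed bound.

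The main obstacle I anticipate is not any single estimate but keeping two things honest simultaneously: (i) the self‑bounding verification must be \emph{uniform over algorithms}, which is precisely why the construction reduces the gap only on a prefix and why the verification uses nothing about the learner beyond $\E[\sum_{t\le\tau}(1-p_{ti^*})]\le\tau$; and (ii) the balanced choice $\Delta'=\sqrt{\log N/\tau}$ must stay in the legal range $(0,\tfrac12]$ and the prefix length $\tau$ must be large enough for the classical stochastic lower bound to be in force across the \emph{entire} parameter range, which is exactly where the hypotheses $\Delta<1/4$ and $T\ge 4\log N$ enter. I would therefore isolate the classical stochastic lower bound as a self‑contained lemma stated for an arbitrary gap $\delta\in(0,\tfrac12]$ and horizon $n$, giving $\Omega(\min\{\tfrac{\log N}{\delta},\delta n\})$, so that everything after the reduction is bookkeeping.
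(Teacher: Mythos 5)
Your proposal is correct and follows essentially the same route as the paper: a two-phase environment whose gap is shrunk from $\Delta$ to $\Delta'\approx\sqrt{\Delta\log N/C}$ on a prefix of length $\tau\approx C/\Delta$ (so that $(\Delta-\Delta')\tau\le C$ certifies the self-bounding constraint uniformly over algorithms), followed by the classical stochastic expert lower bound applied to the prefix. Your packaging into two regimes with the $\Omega(\min\{\log N/\delta,\delta n\})$ form of the base lemma is merely a reorganization of the paper's four-case analysis built on Lemma~\ref{lem:degenne}.
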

To show this lower bound,
we define a distribution $\cD_{\Delta, i^*}$ over $\{ 0 , 1 \}^N$ for $\Delta \in (0, 1 / 4)$ and $i^* \in [N]$,
as follows:
if $\ell \sim \cD_{\Delta, i^*}$,
$\ell_{i^*}$ follows a Bernoulli distribution of parameter $1/2 - \Delta$,
i.e.,
$\prob [ \ell_{i^*} = 1 ] = 1/2 - \Delta$
and
$\prob [ \ell_{i^*} = 0 ] = 1/2 + \Delta$,
and 
$\ell_i$ follows a Bernoulli distribution of parameter $1 / 2$ for $i \in [N] \setminus i^*$,
independently.
We then can employ the following lemma:
\begin{lemma}[Proposition 2 in \citep{mourtada2019optimality}]
  \label{lem:degenne}
  For any algorithm and
  for any $\Delta \in (0, 1 / 4)$, $N \geq 4$ and $T \geq \frac{ \log N}{\Delta^2}$,
  there exists $i^*$ such that
    $
    \bar{R}_{T i^*} \geq \frac{\log N}{256 \Delta}
    $
  holds for $(\ell_t)_{t=1}^T \sim \cD_{\Delta, i^*}^T$.
\end{lemma}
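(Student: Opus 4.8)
The plan is to prove the lemma by an averaging (Bayesian) argument over the identity of the optimal expert, reducing the regret to a statistical \emph{identification} problem and then invoking Fano's inequality to extract the $\log N$ factor. Throughout, write $\mu_{i^*} = 1/2 - \Delta$ and $\mu_i = 1/2$ for $i \neq i^*$, and let $\E_{i^*}$ denote expectation under $\cD_{\Delta,i^*}^T$. First I would record the exact regret identity: since the losses are i.i.d.\ and independent of $p_t$ given the history, the conditional expected instantaneous regret at round $t$ equals $\sum_i (\mu_i - \mu_{i^*})\, \E_{i^*}[p_{ti}] = \Delta\, \E_{i^*}[1 - p_{ti^*}] \geq 0$, so that $\bar R_{Ti^*} = \Delta \sum_{t=1}^T \E_{i^*}[1 - p_{ti^*}]$ and the pseudo-regret is nondecreasing in the horizon. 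It therefore suffices to lower bound the truncated sum over the first $T_1 = \lfloor \frac{\log N}{16\Delta^2}\rfloor$ rounds, which is legitimate because $T_1 \leq \frac{\log N}{\Delta^2} \leq T$.

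Next I would pass to the mixture measure $\mathbb{P}$ that draws the optimal index $I$ uniformly from $[N]$ and then generates $\ell_{1:T} \sim \cD_{\Delta,I}^T$. Averaging the regret identity over $i^*$ gives $\frac1N\sum_{i^*}\bar R_{Ti^*} \geq \Delta\sum_{t=1}^{T_1}\big(1 - \E_{\mathbb{P}}[p_{tI}]\big)$, so the problem reduces to showing that $\E_{\mathbb{P}}[p_{tI}]$, the expected mass the player places on the hidden true arm, stays bounded away from $1$ for all $t \leq T_1$. The key observation is that, conditioning on $\ell_{1:t-1}$ (which determines $p_t$), $\E_{\mathbb{P}}[p_{tI}\mid \ell_{1:t-1}] = \sum_i p_{ti}\,\mathbb{P}(I=i\mid \ell_{1:t-1}) \leq \max_i \mathbb{P}(I=i\mid \ell_{1:t-1})$ because $p_t$ is a probability vector; taking expectations, $\E_{\mathbb{P}}[p_{tI}] \leq \mathbb{P}\big(I = \hat I(\ell_{1:t-1})\big)$, the Bayes-optimal success probability of guessing $I$ from $t-1$ observations. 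Consequently $1 - \E_{\mathbb{P}}[p_{tI}]$ is at least the minimal identification error probability.

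I would then lower bound that error probability by Fano's inequality, $\mathbb{P}(\hat I \neq I) \geq 1 - \frac{I(I;\ell_{1:t-1}) + \log 2}{\log N}$, and control the mutual information by a change of measure against the null distribution $P_0$ in which every arm is $\mathrm{Ber}(1/2)$: using $I(I;\ell_{1:t-1}) \leq \frac1N\sum_{i^*}\KL ( P_{i^*}^{t-1} \| P_0^{t-1} )$ and the fact that $P_{i^*}^{t-1}$ differs from $P_0^{t-1}$ only in coordinate $i^*$, the per-round divergence is $\KL ( \mathrm{Ber}(1/2-\Delta) \| \mathrm{Ber}(1/2) ) \leq 4\Delta^2$, whence $I(I;\ell_{1:t-1}) \leq 4(t-1)\Delta^2$. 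For $t \leq T_1$ this is at most $\tfrac14\log N$, and since $\log 2 \leq \tfrac12 \log N$ for $N \geq 4$, each term obeys $1 - \E_{\mathbb{P}}[p_{tI}] \geq 1 - \tfrac34 = \tfrac14$. Summing, $\frac1N\sum_{i^*}\bar R_{Ti^*} \geq \tfrac14\Delta T_1 \geq \frac{\log N}{64\Delta}$ up to the harmless floor rounding (absorbed by the generous constant since $N \geq 4$ and $\Delta < 1/4$), which comfortably exceeds $\frac{\log N}{256\Delta}$; taking the best $i^*$ (the maximum over $i^*$ dominates the average) yields the claimed existence.

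The main obstacle is precisely the appearance of the $\log N$ factor: a naive two-point comparison between $\cD_{\Delta,i^*}$ and $P_0$ via Pinsker fails, because summing the total-variation bound $\lesssim \Delta\sqrt{t}$ over $t$ up to $\frac{\log N}{\Delta^2}$ blows up well past the horizon and discards the logarithm entirely. The resolution is the multi-hypothesis (Fano) step, which charges the player for having to discriminate among all $N$ candidate optima simultaneously; the two points requiring care are the reduction of ``mass on the true arm'' to a Bayes identification success probability (the posterior-maximum inequality above) and the convexity bound $I(I;\ell_{1:t-1}) \leq \frac1N\sum_{i^*}\KL ( P_{i^*}^{t-1} \| P_0^{t-1} )$ that renders the mutual information computable in closed form.
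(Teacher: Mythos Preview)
The paper does not prove this lemma at all: it is quoted verbatim as Proposition~2 of \citet{mourtada2019optimality} and used as a black box in the proof of Theorem~\ref{thm:lowerexpert}. There is therefore no in-paper argument to compare against.

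Your proof is correct and follows the standard route for full-information lower bounds of this type. The regret identity $\bar R_{Ti^*}=\Delta\sum_t\E_{i^*}[1-p_{ti^*}]$ and the truncation to $T_1=\lfloor\log N/(16\Delta^2)\rfloor$ are clean. The key step---bounding the Bayesian ``hit mass'' $\E_{\mathbb P}[p_{tI}\mid\ell_{1:t-1}]=\sum_i p_{ti}\,\mathbb P(I=i\mid\ell_{1:t-1})\le\max_i\mathbb P(I=i\mid\ell_{1:t-1})$ and identifying the right-hand side with the MAP success probability---is the right reduction, and the Fano application with the null reference $P_0$ is standard; the bound $\KL(\mathrm{Ber}(1/2-\Delta)\|\mathrm{Ber}(1/2))\le 4\Delta^2$ holds (e.g.\ via the $\chi^2$ bound) and yields $I(I;\ell_{1:t-1})\le\tfrac14\log N$ for $t\le T_1$. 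The floor loss is indeed harmless: one gets $\tfrac{\Delta T_1}{4}\ge\frac{\log N}{64\Delta}-\frac{\Delta}{4}$, and under $N\ge 4$, $\Delta<1/4$ the subtracted term is at most $1/16<\frac{3\log 4}{64}\le\frac{3\log N}{256\Delta}\cdot\Delta$, so the bound $\frac{\log N}{256\Delta}$ survives. One minor omission: if the algorithm is randomized you should also condition on its internal coins when asserting that $p_t$ is determined by the past, but since those coins are independent of $I$ the posterior is unchanged and the inequality still holds.
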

Using this lower bound,
we can show Theorem~\ref{thm:lowerexpert}.\\
\textit{Proof of Theorem~\ref{thm:lowerexpert}.} \quad
  We show lower bounds for the following four cases: 
  (i) If $T \leq \frac{\log N}{\Delta^2}$,
  $\bar{R}_{Ti^*} = \Omega( \sqrt{T \log N})$.
  (ii) If $\frac{C}{\Delta} \leq \frac{\log N}{\Delta^2} \leq T$,
  $\bar{R}_{Ti^*} = \Omega( \frac{\log N}{\Delta})$.
  (iii) If $  \frac{\log N}{\Delta^2} \leq \frac{C}{\Delta}\leq T$,
  $\bar{R}_{Ti^*} = \Omega( \sqrt{\frac{C \log N}{\Delta}})$.
  (iv) If $  \frac{\log N}{\Delta^2} \leq T \leq \frac{C}{\Delta}$,
  $\bar{R}_{Ti^*} = \Omega( \sqrt{T \log N })$.
  Combining all four cases of (i)--(iv),
  we obtain \eqref{eq:lowerexpert}.

  (i) Suppose $T < \frac{\log N}{ \Delta^2}$.
  Set $\Delta' = \sqrt{\frac{\log N}{T}}$.
  We then have $T = \frac{\log N}{ \Delta'^2}$ and $\Delta < \Delta' \leq 1 / 4$.
  If $\ell_t \sim \cD_{\Delta', i^*}$ for all $t \in [T]$,
  then the environment is in an adversarial regime with a $(i^*, \Delta, N, C, T)$ self-bounding constraint for any $C\geq 0$,
  and the regret is bounded as
  $\bar{R}_{Ti^*} \geq \frac{\log N}{256 \Delta'} = \Omega( \sqrt{T \log N} )$ from Lemma~\ref{lem:degenne}.

  (ii) Suppose $\frac{C}{\Delta} \leq \frac{\log N}{\Delta^2} \leq T$.
  If $\ell_t \sim \cD_{\Delta, i^*}$ for all $t \in [T]$,
  the regret is bounded as
  $\bar{R}_{Ti^*} \geq \frac{\log N}{256 \Delta}$ for some $i^*$ from Lemma~\ref{lem:degenne}.
  The environment is in an adversarial regime with a $(i^*, \Delta, N, C, T)$ self-bounding for any $C\geq 0$.

  (iii) Suppose $\frac{\log N}{\Delta^2} \leq \frac{C}{\Delta}\leq T$.
  Define $\Delta' = \sqrt{\frac{\Delta \log N}{C}} \leq \Delta$.
  We then have
  $\frac{\log N}{\Delta'} = \sqrt{C \frac{\log N}{\Delta}}$.
  Let $T' = \lceil \frac{\log N}{ \Delta'^2 } \rceil = \lceil \frac{C}{\Delta} \rceil \leq T$.
  Consider an environment in which
  $\ell_t \sim \cD_{\Delta', i^*}$ for $t \in [T']$
  and
  $\ell_t \sim \cD_{\Delta, i^*}$ for $t \in [T'+1, T]$.
  Then from Lemma~\ref{lem:degenne},
  there exists $i^* \in [N]$ such that
  $\bar{R}_{Ti^*} \geq \bar{R}_{T'i^*} \geq \frac{\log N}{\Delta'} = \Omega(\sqrt{\frac{C \log N}{\Delta}})$.
  Further,
  we can show that the environment is in an adversarial regime with a $(i^*, \Delta, N, C, T)$ self-bounding constraint.
  In fact,
  we have $T' (\Delta - \Delta') \leq \frac{C}{\Delta} (\Delta - \Delta') \leq C$.

  (iv) Suppose $  \frac{\log N}{\Delta^2} \leq T \leq \frac{C}{\Delta}$.
  Set $\Delta' = \sqrt{\frac{\log N}{T}}$ and 
  consider $\ell_t \sim \cD_{\Delta', i^*}$ for all $t \in [T]$.
  Then the regret is bounded as
  $\bar{R}_{Ti^*} \geq \frac{\log N}{256 \Delta'} = \Omega( \sqrt{T \log N} )$ for some $i^*$,
  from Lemma~\ref{lem:degenne}.
  We can confirm that the environment is in an adversarial regime with a $(i^*, \Delta, N, C, T)$ self-bounding constraint,
  as we have
  $\Delta' T \leq \Delta T \leq C$,
  where the first and second inequalities follow from $\frac{\log N}{\Delta^2} \leq T$
  and
  $T \leq \frac{C}{\Delta}$,
  respectively.
  \qedwhite

Via a similar strategy to this proof,
we can show the regret lower bound for the MAB problem as well:
\begin{theorem}
  \label{thm:lowerMAB}
  For any $\Delta \in (0, 1 / 4)$, $N \geq 4$, $T \geq 4 \log N$, $C \geq 0$,
  and for any multi-armed bandit algorithm,
  there exists an environment in the adversarial regime with a $(i^*, \Delta, N, C, T)$ self-bounding constraint
  for which
  the pseudo-regret is at least
  \begin{align}
    \label{eq:lowerMAB}
    \bar{R}_{T i^*} = \Omega \left( 
      \min \left\{
        \frac{N}{\Delta} + \sqrt{\frac{C N }{\Delta}},
        \sqrt{N T}
      \right\}
    \right) .
  \end{align}
\end{theorem}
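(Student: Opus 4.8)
The plan is to reuse the proof of Theorem~\ref{thm:lowerexpert} essentially verbatim, with the substitution $\log N \mapsto N$, once the multi-armed-bandit analogue of Lemma~\ref{lem:degenne} is in hand. Thus the first step is to establish the following standard non-asymptotic minimax lower bound for stochastic bandits: there is a universal constant $c>0$ such that, for every bandit algorithm and every $\Delta\in(0,1/4)$, $N\ge 4$ and $T\ge N/\Delta^2$, there exists $i^*\in[N]$ with $\bar R_{Ti^*}\ge cN/\Delta$ when $(\ell_t)_{t=1}^T\sim\cD_{\Delta,i^*}^T$, where $\cD_{\Delta,i^*}$ is the Bernoulli product distribution defined before Lemma~\ref{lem:degenne}. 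I would either cite this (e.g.\ \citep{auer2002finite} or a textbook treatment) or prove it along the usual lines: average the regret over the $N$ ``needle-in-a-haystack'' instances $\cD_{\Delta,j}$, $j\in[N]$, bound $\E_{\cD_{\Delta,j}}[\,\text{number of pulls of arm }j\,]$ against the all-$\tfrac12$ reference instance via Pinsker's inequality together with $\mathrm{kl}(\tfrac12,\tfrac12-\Delta)=O(\Delta^2)$, and finish with Cauchy--Schwarz; the bound becomes $\Omega(N/\Delta)$ exactly when $T\asymp N/\Delta^2$, and for larger $T$ one simply truncates to the first $\Theta(N/\Delta^2)$ rounds.

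With this lemma, the rest is the argument of Theorem~\ref{thm:lowerexpert}. For the self-bounding constraint: if the losses are i.i.d.\ with gap $\Delta'\le\Delta$ on a prefix of $T'$ rounds and i.i.d.\ with gap $\Delta$ afterwards, then since $\ell_t$ is independent of $i_t$ given $p_t$, \eqref{eq:MABreg} yields $\bar R_{Ti^*}\ge\Delta\,\E[\sum_{t=1}^T(1-p_{ti^*})]-T'(\Delta-\Delta')$, so a $(i^*,\Delta,N,C,T)$ constraint holds whenever $T'(\Delta-\Delta')\le C$. Then one runs the same four cases with $\log N$ replaced by $N$: (i) $T\le N/\Delta^2$: gap $\Delta'=\sqrt{N/T}\ge\Delta$ throughout, giving $\bar R_{Ti^*}\ge cN/\Delta'=\Omega(\sqrt{NT})$ for any $C\ge0$; (ii) $C/\Delta\le N/\Delta^2\le T$: pure stochastic gap $\Delta$, giving $\Omega(N/\Delta)$; (iii) $N/\Delta^2\le C/\Delta\le T$: set $\Delta'=\sqrt{\Delta N/C}\le\Delta$ and $T'=\lceil C/\Delta\rceil\le T$, so $N/\Delta'=\sqrt{CN/\Delta}$, $T'\ge N/(\Delta')^2$ and $T'(\Delta-\Delta')\le C$ (up to a constant, since $C/\Delta\ge N/\Delta^2\ge16$), and apply the lemma to the prefix to get $\bar R_{Ti^*}\ge\bar R_{T'i^*}=\Omega(\sqrt{CN/\Delta})$; (iv) $N/\Delta^2\le T\le C/\Delta$: gap $\Delta'=\sqrt{N/T}\le\Delta$ throughout, giving $\Omega(\sqrt{NT})$, with the constraint met since $\Delta'T\le\Delta T\le C$. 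Combining the four regimes gives \eqref{eq:lowerMAB}.

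The main obstacle is the bandit lower bound lemma itself: the excerpt quotes an expert-problem result (Lemma~\ref{lem:degenne}) but no bandit counterpart, so one must invoke or reprove the information-theoretic minimax bound, taking care to obtain the clean rate $\Omega(N/\Delta)$ with no extraneous $\log T$ factor and the matching hypothesis $T=\Omega(N/\Delta^2)$; this is precisely the source of the remaining $\log T$ gap to the Tsallis-INF upper bound. The remaining issues are minor constant bookkeeping: the effective gaps $\Delta'=\sqrt{N/T}$ in cases (i) and (iv) need $T=\Omega(N)$ to satisfy $\Delta'\le1/4$ (so the statement is really about $T\gtrsim N$), and the ceiling in $T'$ in case (iii) makes $T'(\Delta-\Delta')\le C$ hold only up to a factor that can be absorbed into $\Omega(\cdot)$.
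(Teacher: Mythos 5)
Your proposal is correct and follows essentially the same route as the paper: the paper's appendix proof of Theorem~\ref{thm:lowerMAB} is a verbatim transcription of the Theorem~\ref{thm:lowerexpert} argument with $\log N$ replaced by $N$, using exactly the bandit minimax lemma you reconstruct (the paper states it as Lemma~\ref{lem:auer}, cited from \citet{auer2002nonstochastic}, so no new proof is needed), and runs the identical four-case split with the same choices of $\Delta'$ and $T'$.
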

We can show this theorem by means of the following lemma:
\begin{lemma}[\citep{auer2002nonstochastic}]
  \label{lem:auer}
  For any multi-armed bandit algorithm and
  for any $\Delta \in (0, 1 / 4)$, $N \geq 4$ and $T \geq \frac{N}{ \Delta^2}$,
  there exists $i^*$ such that
  $
    \bar{R}_{T i^*} \geq \frac{N}{32 \Delta}
    $
  holds for $(\ell_t)_{t=1}^T \sim \cD_{\Delta, i^*}^T$.
\end{lemma}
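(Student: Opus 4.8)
The plan is to prove this classical stochastic lower bound by the standard information-theoretic change-of-measure argument, averaging over the identity of the optimal arm. I would introduce an auxiliary \emph{uniform} environment $\cD_0$ in which every arm has loss distributed as $\mathrm{Ber}(1/2)$; write $\E_0$ for the expectation over the interaction under $\cD_0^T$ and $\E_{i^*}$ for the expectation under $\cD_{\Delta,i^*}^T$. Let $N_i(s)=\sum_{t=1}^s \mathbf{1}[i_t=i]$ count the pulls of arm $i$ in the first $s$ rounds. Under $\cD_{\Delta,i^*}$ the only arm with reduced mean loss is $i^*$, so the per-round pseudo-regret equals $\Delta(1-p_{ti^*})$, and by \eqref{eq:MABreg} we get $\bar{R}_{Ti^*}=\Delta\,\E_{i^*}[T-N_{i^*}(T)]$. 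Since each summand $\mathbf{1}[i_t\neq i^*]$ is nonnegative, this is at least $\Delta\,\E_{i^*}[T_0-N_{i^*}(T_0)]$ for any cutoff $T_0\le T$; I would fix $T_0=\lfloor cN/\Delta^2\rfloor$ for a small absolute constant $c$ (so $T_0\le T$ by the hypothesis $T\ge N/\Delta^2$), reducing everything to the first $T_0$ rounds. By the pigeonhole principle it then suffices to lower bound the average $\frac1N\sum_{i^*=1}^N\bar{R}_{Ti^*}\ge \Delta T_0-\frac{\Delta}{N}\sum_{i^*}\E_{i^*}[N_{i^*}(T_0)]$, so the task becomes an \emph{upper} bound on $\sum_{i^*}\E_{i^*}[N_{i^*}(T_0)]$.

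The crucial step compares each $\E_{i^*}[N_{i^*}(T_0)]$ against $\E_0[N_{i^*}(T_0)]$. Fixing the algorithm's internal randomization, $N_{i^*}(T_0)$ is a function of the observed history $(i_1,\ell_{1i_1},\dots,i_{T_0},\ell_{T_0 i_{T_0}})$ taking values in $[0,T_0]$, so $\E_{i^*}[N_{i^*}(T_0)]\le \E_0[N_{i^*}(T_0)]+T_0\,\|P_{i^*}-P_0\|_{\mathrm{TV}}$, where $P_{i^*},P_0$ are the laws of that history. Pinsker's inequality bounds the total variation by $\sqrt{\tfrac12\,\KL(P_0\|P_{i^*})}$, and the divergence decomposition for bandits (the chain rule, using that $\cD_0$ and $\cD_{\Delta,i^*}$ differ only on arm $i^*$) gives $\KL(P_0\|P_{i^*})=\E_0[N_{i^*}(T_0)]\cdot d$, where $d=\KL(\mathrm{Ber}(1/2)\,\|\,\mathrm{Ber}(1/2-\Delta))=-\tfrac12\log(1-4\Delta^2)\le 3\Delta^2$ for $\Delta<1/4$.

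Combining these pieces yields $\sum_{i^*}\E_{i^*}[N_{i^*}(T_0)]\le \sum_{i^*}\E_0[N_{i^*}(T_0)]+T_0\sqrt{d/2}\,\sum_{i^*}\sqrt{\E_0[N_{i^*}(T_0)]}$. Here $\sum_{i^*}\E_0[N_{i^*}(T_0)]=T_0$ because $\sum_i N_i(T_0)=T_0$ holds deterministically, and Cauchy--Schwarz gives $\sum_{i^*}\sqrt{\E_0[N_{i^*}(T_0)]}\le\sqrt{N T_0}$. Substituting $d\le 3\Delta^2$ and $T_0\asymp N/\Delta^2$ into $\frac1N\sum_{i^*}\bar{R}_{Ti^*}\ge \Delta T_0-\frac{\Delta T_0}{N}-\frac{\Delta T_0}{N}\sqrt{\tfrac{d}{2}N T_0}$ makes the dominant positive term $\Delta T_0=\Theta(N/\Delta)$, and a suitable choice of $c$ makes the ``confusion'' term a strict fraction of it; a short computation (e.g.\ $c=1/6$ gives $\frac{N-2}{12\Delta}\ge \frac{N}{24\Delta}$ for $N\ge4$) then yields $\frac1N\sum_{i^*}\bar{R}_{Ti^*}\ge N/(32\Delta)$, so some $i^*$ attains the claimed bound.

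I expect the main obstacle to be the constant bookkeeping rather than any conceptual difficulty. With the natural choice $T_0=N/\Delta^2$ the positive term $\Delta T_0$ and the confusion term $\frac{\Delta T_0}{N}\sqrt{\tfrac{d}{2}N T_0}$ are \emph{both} $\Theta(N/\Delta)$ and, because $d\approx 2\Delta^2$, they nearly cancel; the remedy is precisely to take $T_0=cN/\Delta^2$ with $c$ small enough that $\sqrt{dT_0/(2N)}\cdot(N/\Delta)$ is a fixed fraction of $\Delta T_0$, after which one checks the residual exceeds $N/(32\Delta)$ for all $N\ge4$ and $\Delta\in(0,1/4)$. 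A secondary point needing care is stating and justifying the divergence decomposition cleanly, namely that the KL between the two history laws accumulates exactly $d$ per expected pull of the single arm $i^*$ on which the environments disagree, together with the boundedness used in the total-variation step.
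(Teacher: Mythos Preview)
The paper does not prove this lemma at all: it is stated with a citation to \citet{auer2002nonstochastic} and used as a black box in the proof of Theorem~\ref{thm:lowerMAB}. Your proposal supplies the standard information-theoretic argument (uniform reference environment, divergence decomposition, Pinsker, Cauchy--Schwarz over arms, truncation to $T_0=\Theta(N/\Delta^2)$ rounds), and the steps and constants check out; in particular your choice $c=1/6$ together with $d\le 3\Delta^2$ yields $\frac{1}{N}\sum_{i^*}\bar{R}_{Ti^*}\ge \frac{N-2}{12\Delta}\ge\frac{N}{24\Delta}\ge\frac{N}{32\Delta}$ for $N\ge 4$. So there is nothing to compare against in the paper itself, and your argument is a correct instantiation of the classical proof the citation points to.
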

A complete proof of Theorem~\ref{thm:lowerMAB} can be found in the appendix.

\section{Discussion}
In this paper, 
we have shown $O( R + \sqrt{CR} )$-regret bounds for the expert problem,
where $R$ stands for the regret bounds for the environment without corruptions
and $C$ stands for the corruption level.
From the matching lower bound,
we can see that this $O(\sqrt{CR})$-term characterizes the optimal robustness against the corruptions.
One natural question is whether such an $O(R + \sqrt{CR})$-type regret bounds can be found for other online decision problems,
such as online linear optimization,
online convex optimization,
linear bandits,
and convex bandits.
Other than the algorithms for the expert problem,
the Tsallis-INF algorithm by \citet{zimmert2021tsallis} for the MAB problem
is only concrete example that achieves $O(R + \sqrt{CR})$-regret
to our knowledge.
What these algorithm have in common is that
they use a framework of Follow-the-Regularized-Leader with decreasing learning rates
and that they achieve the best-of-both-world simultaneously.
As \citet{amir2020prediction} suggest,
Online Mirror Descent algorithms does not have $O(R + \sqrt{CR})$-regret bound,
in contrast to Follow-the-Regularized-Leader.
We believe that characterizing algorithms with $O(R + \sqrt{CR})$-regret bounds 
is an important future work.


\begin{ack}
    The author was supported by JST, ACT-I, Grant Number JPMJPR18U5, Japan.
\end{ack}

\bibliographystyle{abbrvnat}
\bibliography{reference}

\appendix

\newpage
\section{Appendix}
\subsection{Proof of Lemma~\ref{lem:boundMWU}}
The Hedge algorithm defined in \eqref{eq:defMWU}
can be interpreted as a special case of follow-the-regularized leader (FTRL) methods,
as follows:
\begin{align}
    p_t
    \in
    \argmin_{
        p \in [0, 1]^N:
        \| p \|_1 = 1
    }
    \left\{
        \sum_{j=1}^{t-1}
        \ell_{j}^\top p
        -
        \frac{1}{\eta_t}
        H(p)
    \right\}.
\end{align}
From a standard analysis of FTRL (see,
e.g.,
Exercise 28.12 in the book by \citet{lattimore2020bandit},
where we set $F_t(x) = - \frac{1}{\eta_t}H(x)$
),
we have
\begin{align}
    \label{eq:boundMWU1}
    R_{T}
    \leq
    \sum_{t=1}^T
    \left(
        \ell_t^\top (p_t - p_{t+1})
        -
        \frac{1}{\eta_{t}}
        KL(p_{t+1}, p_t)
    \right)
    +
    \frac{H (p_1) }{\eta_1}
    +
    \sum_{t=1}^T
    \left(
        \frac{1}{\eta_{t+1}}
        -
        \frac{1}{\eta_{t}}
    \right)
    H(p_{t+1}),
\end{align}
where $KL(p, q)$ represents the KL divergence
defined by
$
KL(p, q)
=
\sum_{i=1}^N
(p_i \log \frac{p_i}{q_i} 
-
p_i
+
q_i
)
$.
For any $\alpha_t \in \re$,
the first term in the right-hand side can be bounded as
\begin{align}
    \nonumber
    \ell_t^\top (p_t - p_{t+1})
    -
    \frac{1}{\eta_{t}}
    KL(p_{t+1}, p_t)
    &
    =
    (\ell_t - \alpha_t \mathbf{1})^\top (p_t - p_{t+1})
    -
    \frac{1}{\eta_{t}}
    KL(p_{t+1}, p_t)
    \\
    &
    \leq
    \max_{p \in \re_{>0}^N}
    \left\{
    (\ell_t - \alpha_t \mathbf{1})^\top (p_t - p)
    -
    \frac{1}{\eta_{t}}
    KL(p, p_t)
    \right\},
    \label{eq:boundKL}
\end{align}
where $\mathbf{1} \in \re^N$ represents the all-one vector,
and the equality follows from the fact that
$
\sum_{i=1}^N p_{ti}=
\sum_{i=1}^N p_{t+1,i}=
1
$.
The maximum in \eqref{eq:boundKL} is attained by
$p=( p_{ti} \exp( - \eta_t (\ell_{ti} - \alpha_t) ) )_{i=1}^N$.
In fact,
the objective function is concave in $p$ and
its gradient can be expressed as
\begin{align*}
    \nabla_p
    \left(
    (\ell_t - \alpha_t \mathbf{1})^\top (p_t - p)
    -
    \frac{1}{\eta_{t}}
    KL(p, p_t)
    \right)
    =
    -
    (\ell_t - \alpha_t \mathbf{1})
    -
    \frac{1}{\eta_t}
    \left(
        (\log p_i)_{i=1}^N
        -
        (\log p_{ti})_{i=1}^N
    \right),
\end{align*}
which is the zero vector
if and only if
$p=( p_{ti} \exp( - \eta_t (\ell_{ti} - \alpha_t) ) )_{i=1}^N$.
By substituting this into the objective function,
we have
\begin{align*}
    &
    \max_{p \in \re_{>0}^d}
    \left\{
    (\ell_t - \alpha_t \mathbf{1})^\top (p_t - p)
    -
    \frac{1}{\eta_{t}}
    KL(p, p_t)
    \right\}
    \\
    &
    =
    \sum_{i=1}^N
    \left(
    (\ell_{ti} - \alpha_t)
    p_{ti}
    -
    \frac{1}{\eta_t}
    (p_{ti} - 
    p_{ti} \exp(- \eta_t (\ell_{ti} - \alpha_t)) )
    \right)
    \\
    &
    =
    \frac{1}{\eta_t}
    \sum_{i=1}^N
    p_{ti}
    \left(
    \exp(- \eta_t (\ell_{ti} - \alpha_t)) )
    + \eta_t(\ell_{ti} - \alpha_t)
    - 1
    \right)
    =
    \frac{1}{\eta_t}
    \sum_{i=1}^N
    g\left( 
    - \eta_t(\ell_{ti} - \alpha_t)
    \right) .
\end{align*}
Combining this with \eqref{eq:boundMWU1} and \eqref{eq:boundKL},
and from $H(p_1) = H (\mathbf{1} / N) = \log N$,
we obtain \eqref{eq:boundMWU0}.
\qed

\subsection{Proof of (\ref{eq:decreasing1})}
To show \eqref{eq:decreasing1},
we use the following upper bound on $H(p)$:
\begin{lemma}
    For any $p \in [0, 1]^N$
    such that $\| p \|_1 = 1$
    and $i^* \in [N]$,
    we have
    \begin{align}
        \label{eq:boundH}
        H(p)
        \leq 
        (1 - p_i^*)
        \left( 1 + \log \frac{N-1}{1 - p_{i^*}} \right).
    \end{align}
\end{lemma}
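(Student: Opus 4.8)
The plan is to split the entropy according to whether a coordinate equals $i^*$ and to treat the remaining mass via the chain rule for entropy. Write $q = 1 - p_{i^*}$. If $q = 0$ (equivalently $p_{i^*} = 1$, which is forced when $N = 1$), both sides of \eqref{eq:boundH} vanish under the convention $0 \log 0 = 0$, so assume $q > 0$ and $N \ge 2$. Put $r_i = p_i / q$ for $i \in [N] \setminus \{i^*\}$; since $\sum_{i \neq i^*} p_i = 1 - p_{i^*} = q$, the vector $r$ is a probability distribution on the $N-1$ coordinates other than $i^*$.

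First I would evaluate the contribution of those coordinates:
\[
\sum_{i \neq i^*} p_i \log \frac{1}{p_i}
= \sum_{i \neq i^*} q r_i \left( \log \frac{1}{q} + \log \frac{1}{r_i} \right)
= q \log \frac{1}{q} + q \sum_{i \neq i^*} r_i \log \frac{1}{r_i}
\le q \log \frac{1}{q} + q \log (N-1) = q \log \frac{N-1}{q},
\]
using that the entropy of a distribution supported on $N-1$ points is at most $\log(N-1)$. It then remains to absorb $p_{i^*} \log \frac{1}{p_{i^*}} = (1-q) \log \frac{1}{1-q}$ into the additive slack, i.e.\ to show $(1-q) \log \frac{1}{1-q} \le q$; I would obtain this from $\log x \le x - 1$ at $x = (1-q)^{-1}$, which gives $\log \frac{1}{1-q} \le \frac{q}{1-q}$, and then multiply through by $1-q > 0$.

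Combining the two estimates,
\[
H(p) = (1-q) \log \frac{1}{1-q} + \sum_{i \neq i^*} p_i \log \frac{1}{p_i}
\le q + q \log \frac{N-1}{q}
= (1 - p_{i^*}) \left( 1 + \log \frac{N-1}{1 - p_{i^*}} \right),
\]
which is exactly \eqref{eq:boundH}. There is no real obstacle here: the argument is a one-line chain-rule decomposition plus the elementary inequality $\log x \le x - 1$; the only points to watch are the boundary convention $0 \log 0 = 0$ and the normalization $\sum_{i \neq i^*} r_i = 1$. Once \eqref{eq:boundH} is in hand, substituting it into \eqref{eq:boundMWU0} together with $g(x) \le \frac{(e-1)x^2}{2}$ for $x \le 1$ and $\eta_t = \sqrt{8 \log N / t}$, and telescoping the $\left( \eta_{t+1}^{-1} - \eta_t^{-1} \right)$ terms, yields \eqref{eq:decreasing1}.
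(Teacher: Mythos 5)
Your proof is correct and follows essentially the same route as the paper: split $H(p)$ into the $i^*$ term and the rest, bound the latter by $(1-p_{i^*})\log\frac{N-1}{1-p_{i^*}}$ (your chain-rule decomposition is just an explicit form of the paper's ``uniform maximizes entropy'' step), and bound the former by $1-p_{i^*}$ via $\log x \le x-1$, which is the same elementary inequality the paper invokes as $\log(1+x)\le x$. No gaps; the boundary conventions you flag are handled correctly.
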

\begin{proof}
    The value of $H(p)$ can be expressed as
    \begin{align}
        \label{eq:boundH0}
        H(p)
        =
        p_{i^*} \log \frac{1}{p_{i^*}}
        +
        \sum_{i \in [N] \setminus \{ i^* \}}
        p_i \log \frac{1}{p_i} .
    \end{align}
    The first term can be bounded as
    \begin{align}
        \label{eq:boundH1}
        p_{i^*} \log \frac{1}{p_{i^*}}
        =
        p_{i^*}\log \left(
            1 + \frac{1 - p_{i^*}}{p_{i^*}}
        \right)
        \leq
        p_{i^*} \left(
            \frac{1 - p_{i^*}}{p_{i^*}}
        \right)
        =
        1 - p_{i^*},
    \end{align}
    where the inequality follows from $\log (1+x) \leq x$ that holds for any $x > -1$.
    When $p_{i^*}$ is fixed,
    the second term of the right-hand side of \eqref{eq:boundH0} is
    maximized by setting $p_i = \frac{1-p_{i^*}}{N-1}$ for all $i \in [N] \setminus \{ i^* \}$,
    and hence,
    its value can be bounded as
    \begin{align}
        \sum_{i \neq [N] \setminus \{ i^* \}}
        p_i \log \frac{1}{p_i} 
        \leq
        (N-1)
        \frac{1- p_{i^*}}
        {N-1}
        \log \frac{N-1}{1 - p_{i^*}}
        =
        (1- p_{i^*})
        \log \frac{N-1}{1 - p_{i^*}} .
    \end{align}
    Combining this with \eqref{eq:boundH0} and \eqref{eq:boundH1},
    we obtain \eqref{eq:boundH}.
\end{proof}
\textit{Proof of \eqref{eq:decreasing1}.}~
  Set $T' = \lfloor 8 \log N \rfloor$.
  We then have $\eta_t \leq 1$ for $t > T'$.
  From Lemma~\ref{lem:boundMWU}
  with $\alpha_t = 0$ for $t \leq T'$ and $\alpha_t = \ell_{ti^*}$ for $t > T$,
  we have
  \begin{align*}
    &
    R_{Ti^*}
    \leq
    \frac{\log N}{\eta_1}
    +
    \sum_{t=1}^{T'}
    \frac{1}{\eta_t}
    \sum_{i=1}^N
    p_{ti} g(\eta_t(-\ell_{ti}))
    +
    \sum_{t=T'+1}^T
    \frac{1}{\eta_t}
    \sum_{i=1}^N
    p_{ti} g(\eta_t(\ell_{ti^*} - \ell_{ti}))
    +
    \sum_{t=1}^T
    \left(
        \frac{1}{\eta_{t+1}}
        -
        \frac{1}{\eta_{t}}
    \right)
    H(p_{t+1})
    \\
    &
    \leq
    \frac{\log N}{\eta_1}
    +
    \sum_{t=1}^{T'}
    \sum_{i=1}^N
    p_{ti}
    +
    (e-2)
    \sum_{t=T'+1}^T
    \eta_t
    \sum_{i=1}^N
    p_{ti} (\ell_{ti} - \ell_{ti^*})^2
    +
    \sum_{t=1}^T
    \left(
        \frac{1}{\eta_{t+1}}
        -
        \frac{1}{\eta_{t}}
    \right)
    H(p_{t+1})
    \\
    &
    \leq
    \frac{\log N}{\eta_1}
    +
    T'
    +
    (e-2)
    \sum_{t=T'+1}^T
    \eta_t
    (1-p_{ti^*})
    +
    \sum_{t=1}^T
    \left(
        \frac{1}{\eta_{t+1}}
        -
        \frac{1}{\eta_{t}}
    \right)
    (1 - p_{ti^*})
    \left( 1 + \log \frac{N-1}{1 - p_{ti^*}} \right)
    \\
    &
    \leq
    \sqrt{\frac{\log N}{8}}
    +
    8 \log N
    +
    \sqrt{8\log N}
    (e-2)
    \sum_{t=T'+1}^T
    \frac{1-p_{ti^*}}{\sqrt{t}}
    +
    \frac{1}{2\sqrt{8\log N}}
    \sum_{t=1}^T
    \frac{1 - p_{ti^*}}{\sqrt{t}}
    \left( 1 + \log \frac{N-1}{1 - p_{ti^*}} \right)
    \\&
    \leq
    32 \log N
    +
    \sqrt{8 \log N}
    \sum_{t=1}^{T}
    \frac{1 - p_{ti^*}}{\sqrt{t}}
    \left(
      1
      +
      \frac{1}{16 \log N } 
      \log \frac{1}{1 - p_{ti^*}}
    \right),
  \end{align*}
  where the second inequality follows from
  $g(-y) \leq y$ for $y\geq 0$,
  $g(-y) \leq (\mathrm{e} - 2)y^2$ for $y \geq - 1$,
  and $\ell_{ti} \in [0, 1]$.
  The third inequality follows from \eqref{eq:boundH}
  and the forth inequality follows from $\eta_{t} = \sqrt{\frac{8\log N}{t}}$.
  \qed

\subsection{Proof of (\ref{eq:decreasing2})}
We use the following lemma
to bound the left-hand side of \eqref{eq:decreasing2}.
\begin{lemma}
    For any $c \in \re$,
    we have
  \begin{align}
    \label{eq:maxfx}
    \max_{x \in (0, 1]}
    x (c - \log x) \leq
    \left\{
      \begin{array}{ll}
        \exp(c-1) & c \leq 1
        \\
        c & c > 1
      \end{array}
    \right. .
  \end{align}
\end{lemma}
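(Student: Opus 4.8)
The plan is to study the single-variable function $f(x) = x(c - \log x) = cx - x\log x$ on the interval $(0,1]$ by elementary calculus. First I would compute $f'(x) = c - \log x - 1$ and $f''(x) = -1/x < 0$, so $f$ is strictly concave on $(0,1]$, and its unique unconstrained stationary point solves $\log x = c - 1$, i.e.\ $x^\star = \exp(c-1)$. I would also record that $f$ extends continuously to $x=0$ with value $0$ (since $x\log x \to 0$ as $x\to 0^+$), so that the supremum over $(0,1]$ is in fact attained and it suffices to compare the value at $x^\star$ (when it lies in the interval) with the value at the endpoint $x=1$.

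For the case $c \le 1$, I would note that $x^\star = \exp(c-1) \le 1$, so the stationary point lies in $(0,1]$; by concavity it is the global maximizer, and substituting gives $f(x^\star) = \exp(c-1)\bigl(c - (c-1)\bigr) = \exp(c-1)$, matching the first branch of \eqref{eq:maxfx}. For the case $c > 1$, the stationary point $x^\star = \exp(c-1)$ lies strictly to the right of $1$, hence outside $(0,1]$; since $\log x \le 0$ for $x \in (0,1]$, we have $f'(x) = c - 1 - \log x \ge c - 1 > 0$ on the whole interval, so $f$ is strictly increasing on $(0,1]$ and its maximum is attained at $x = 1$, where $f(1) = c$. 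Combining the two cases yields \eqref{eq:maxfx}.

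I do not expect a genuine obstacle here: the argument is routine differentiation plus a case split on whether the interior critical point falls inside the feasible interval. The only points requiring a little care are the endpoint behaviour as $x \to 0^+$ (needed to justify that the maximum is attained rather than merely a supremum) and, in the case $c > 1$, correctly discarding the interior critical point and reading off the value at the boundary $x = 1$. It may also be worth remarking that the two branches agree at $c = 1$ (both equal $1$), so the stated bound is continuous in $c$.
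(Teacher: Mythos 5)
Your proof is correct and follows essentially the same route as the paper's: differentiate $f(x)=x(c-\log x)$, use concavity to locate the maximizer at $\min\{1,\exp(c-1)\}$, and substitute, with the case split on whether the stationary point lies in $(0,1]$. Your additional remarks (the limit as $x\to 0^+$ and the agreement of the two branches at $c=1$) are fine but not needed.
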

\begin{proof}
    Define $f:\re_{>0} \rightarrow \re$ by
    $f(x) = x (c - \log x)$.
    The derivative
    of $f$ can be expressed as
  \begin{align}
    f'(x)
    =
    (c - \log x) - 1 ,
  \end{align}
  which implies that $x = \exp(c-1)$ is the unique stationary point of $f$.
  As $f(x)$ is concave in $x \in \re_{>0}$,
  subject to the constraint of $x \in (0, 1]$,
  $f(x)$ is maximized by
$
x = \min \{1, \exp(c - 1) \}
$.
By substituting this into
$f(x) = x (c - \log x)$,
we obtain \eqref{eq:maxfx}.
\end{proof}
\textit{Proof of \eqref{eq:decreasing2}.}~
  Set $T' = \lceil (\frac{a}{b})^2 \rceil$.
  From \eqref{eq:maxfx},
  we have
  \begin{align}
    \nonumber
    &
    \sum_{t=1}^T
    \frac{x_t}{\sqrt{t}}
    \left( a - b \sqrt{t} - \log x_t \right)
    =
    \sum_{t=1}^{T'}
    \frac{x_t}{\sqrt{t}}
    \left( a - b \sqrt{t} - \log x_t \right)
    +
    \sum_{t=T' + 1}^{T}
    \frac{x_t}{\sqrt{t}}
    \left( a - b \sqrt{t} - \log x_t \right)
    \\
    &
    \leq
    \sum_{t=1}^{T'}
    \frac{a - b\sqrt{t}}{\sqrt{t}}
    +
    \sum_{t=T' + 1}^{\infty}
    \frac{1}{\sqrt{t}}
    \exp \left(
      a - b \sqrt{t} - 1
    \right)
    \leq
    a
    \sum_{t=1}^{T'}
    \frac{1}{\sqrt{t}}
    +
    \exp(a - 1)
    \sum_{t=T' + 1}^{\infty}
    \frac{
    \exp \left(
      - b \sqrt{t} 
    \right)
    }{\sqrt{t}}.
    \label{eq:decreasing20}
  \end{align}
  The first term can be bounded by
  \begin{align}
    \label{eq:boundsumsqrt}
    \sum_{t=1}^{T'}
    \frac{1}{\sqrt{t}}
    \leq
    2
    \sum_{t=1}^{T'}
      \frac{1}{\sqrt{t} + \sqrt{t - 1}}
    =
    2
    \sum_{t=1}^{T'}
    \left(
      \sqrt{t} - \sqrt{t-1}
    \right)
    =
    2 \sqrt{T'}
    \leq
    2 \sqrt{
        \left(
            \frac{a}{b}
        \right)^2
        + 1
    }
    \leq
    \frac{2 a}{b} + \frac{b}{a}.
  \end{align}
  Further,
  as $\exp(- b \sqrt{y})$ is convex in $y \geq 0$,
  and as its derivative in $y$ can be expressed as
  $- \frac{b}{2\sqrt{y}} \exp(- b \sqrt{y})$,
  we have
  \begin{align}
    \label{eq:boundsqrtexp}
    \exp(- b \sqrt{t - 1})
    -
    \exp(- b \sqrt{t})
    \geq
    \frac{b}{2 \sqrt{t}}
    \exp(- b \sqrt{t}).
  \end{align}
  From this,
  we have
  \begin{align*}
    \exp(a-1)
    \sum_{t=T' + 1}^{\infty}
    \frac{
    \exp \left(
      - b \sqrt{t} 
    \right)
        }{\sqrt{t}}
    &
    \leq
    \frac{2
    \exp(a-1)
    }{b}
    \sum_{t=T'+1}^{\infty}
    \left(
    \exp(- b \sqrt{t - 1})
    -
    \exp(- b \sqrt{t})
    \right)
    \\
    &
    =
    \frac{2}{b}
    \exp( - b \sqrt{T'} + a - 1)
    \leq
    \frac{2}{b}
    \exp(-1)
    \leq
    \frac{1}{b},
  \end{align*}
  where the first inequality follows from \eqref{eq:boundsqrtexp} and
  the second inequality follows from $T' \geq (\frac{a}{b})^2$.
  Combining this with
  \eqref{eq:decreasing20} and \eqref{eq:boundsumsqrt},
  we obtain \eqref{eq:decreasing2}.
  \qed

\subsection{Proof of Theorem~\ref{thm:lowerMAB}}
  We show lower bounds for the following four cases: 
  (i) If $T \leq \frac{N}{\Delta^2}$,
  $\bar{R}_{Ti^*} = \Omega( \sqrt{T N})$.
  (ii) If $\frac{C}{\Delta} \leq \frac{N}{\Delta^2} \leq T$,
  $\bar{R}_{Ti^*} = \Omega( \frac{N}{\Delta})$.
  (iii) If $  \frac{N}{\Delta^2} \leq \frac{C}{\Delta}\leq T$,
  $\bar{R}_{Ti^*} = \Omega( \sqrt{\frac{C N}{\Delta}})$.
  (iv) If $  \frac{ N}{\Delta^2} \leq T \leq \frac{C}{\Delta}$,
  $\bar{R}_{Ti^*} = \Omega( \sqrt{TN})$.
  Combining all four cases of (i)--(iv),
  we obtain \eqref{eq:lowerMAB}.

  (i) Suppose $T < \frac{ N}{ \Delta^2}$.
  Set $\Delta' = \sqrt{\frac{ N}{T}}$.
  We then have $T = \frac{ N}{ \Delta'^2}$ and $\Delta < \Delta' \leq 1 / 4$.
  If $\ell_t \sim \cD_{\Delta', i^*}$ for all $t \in [T]$,
  then the environment is in an adversarial regime with a $(i^*, \Delta, N, C, T)$ self-bounding constraint for any $C\geq 0$,
  and the regret is bounded as
  $\bar{R}_{Ti^*} \geq \frac{N}{32 \Delta'} = \Omega( \sqrt{T  N} )$ from Lemma~\ref{lem:auer}.

  (ii) Suppose $\frac{C}{\Delta} \leq \frac{N}{\Delta^2} \leq T$.
  If $\ell_t \sim \cD_{\Delta, i^*}$ for all $t \in [T]$,
  the regret is bounded as
  $\bar{R}_{Ti^*} \geq \frac{N}{32 \Delta}$ for some $i^*$ from Lemma~\ref{lem:auer}.
  The environment is in an adversarial regime with a $(i^*, \Delta, N, C, T)$ self-bounding constraint for any $C\geq 0$.

  (iii) Suppose $\frac{N}{\Delta^2} \leq \frac{C}{\Delta}\leq T$.
  Define $\Delta' = \sqrt{\frac{\Delta  N}{C}} \leq \Delta$.
  We then have
  $\frac{ N}{\Delta'} = \sqrt{C \frac{ N}{\Delta}}$.
  Let $T' = \lceil \frac{ N}{ \Delta'^2 } \rceil = \lceil \frac{C}{\Delta} \rceil \leq T$.
  Consider an environment in which
  $\ell_t \sim \cD_{\Delta', i^*}$ for $t \in [T']$
  and
  $\ell_t \sim \cD_{\Delta, i^*}$ for $t \in [T'+1, T]$.
  Then from Lemma~\ref{lem:auer},
  there exists $i^* \in [N]$ such that
  $\bar{R}_{Ti^*} \geq \bar{R}_{T'i^*} \geq \frac{N}{32 \Delta'} = \Omega(\sqrt{\frac{C N}{\Delta}})$.
  Further,
  we can show that the environment is in an adversarial regime with a $(i^*, \Delta, N, C, T)$ self-bounding constraint.
  In fact,
  we have $T' (\Delta - \Delta') \leq \frac{C}{\Delta} (\Delta - \Delta') \leq C$.

  (iv) Suppose $  \frac{ N}{\Delta^2} \leq T \leq \frac{C}{\Delta}$.
  Set $\Delta' = \sqrt{\frac{ N}{T}}$ and 
  consider $\ell_t \sim \cD_{\Delta', i^*}$ for all $t \in [T]$.
  Then the regret is bounded as
  $\bar{R}_{Ti^*} \geq \frac{ N}{32 \Delta'} = \Omega( \sqrt{T  N} )$ for some $i^*$,
  from Lemma~\ref{lem:auer}.
  We can confirm that the environment is in an adversarial regime with a $(i^*, \Delta, N, C, T)$ self-bounding constraint,
  as we have
  $\Delta' T \leq \Delta T \leq C$,
  where the first and second inequalities follow from $\frac{N}{\Delta^2} \leq T$
  and
  $T \leq \frac{C}{\Delta}$,
  respectively.
  \qed

\end{document}